\def\eqref#1{equation~\ref{#1}}
\def\1{\bm{1}}
\DeclareMathAlphabet{\mathsfit}{\encodingdefault}{\sfdefault}{m}{sl}
\SetMathAlphabet{\mathsfit}{bold}{\encodingdefault}{\sfdefault}{bx}{n}
\newtheorem{theorem}{Theorem}
\newtheorem{corollary}[theorem]{Corollary}
\newtheorem{lemma}[theorem]{Lemma}
\theoremstyle{definition}
\theoremstyle{proposition}
\newtheorem{proposition}{Proposition}[section]
\theoremstyle{remark}
\newtheorem*{remark}{Remark}
\title{The Forecast After the Forecast: A Post-Processing Shift in Time Series}
\author{
\vspace{2.5mm}
Daojun Liang \textsuperscript{\rm 1,\rm 2} \quad 
Qi Li \textsuperscript{\rm 1,\rm 2} \quad 
Yinglong Wang \textsuperscript{\rm 1,\rm 2} \quad 
Jing Chen \textsuperscript{\rm 1,\rm 2} \quad 
Hu Zhang \textsuperscript{\rm 1,\rm 2} \\ 
\vspace{2.5mm}
\quad \textbf{Xiaoxiao Cui} \textsuperscript{\rm 3} \quad
\textbf{Qizheng Wang} \textsuperscript{\rm 1,\rm 2} \quad 
\textbf{Shuo Li} \textsuperscript{\rm 4, \rm 5} \\
\textsuperscript{\rm 1} Key Laboratory of Computing Power Network and Information Security, Ministry of Education, \\
\quad Shandong Computer Science Center (National Supercomputing Center in Jinan), \\ 
\quad Qilu University of Technology (Shandong Academy of Sciences), Jinan, 250103, China \\
\textsuperscript{\rm 2} Shandong Provincial Key Laboratory of Computing Power Internet and Service Computing, \\ 
\quad Shandong Fundamental Research Center for Computer Science, Jinan, 250103, China \\
\textsuperscript{\rm 3}  Joint SDU-NTU Centre for Artificial Intelligence Research (C-FAIR), Shandong University \\
\textsuperscript{\rm 4} Department of Computer and Data Sciences, Case Western Reserve University, Cleveland, USA \\
\textsuperscript{\rm 5} Department of Biomedical Engineering, Case Western Reserve University, Cleveland, USA \\ 
\quad \texttt{liangdj@sdas.org, \{li.qi, wangyinglong\}@qlu.edu.cn} \\
\quad \texttt{\{chenj,zhanghu\}@sdas.org, cuixiaoxiao711@hotmail.com} \\
\quad \texttt{574370699@qq.com, shuo.li11@case.edu}
}
\begin{document}

\maketitle


\begin{abstract}
Time series forecasting has long been dominated by advances in model architecture, with recent progress driven by deep learning and hybrid statistical techniques. However, as forecasting models approach diminishing returns in accuracy, a critical yet underexplored opportunity emerges: the strategic use of post-processing. In this paper, we address the last-mile gap in time-series forecasting, which is to improve accuracy and uncertainty without retraining or modifying a deployed backbone. We propose $\delta$-Adapter, a lightweight, architecture-agnostic way to boost deployed time series forecasters without retraining. $\delta$-Adapter learns tiny, bounded modules at two interfaces: input nudging (soft edits to covariates) and output residual correction. We provide local descent guarantees, $O(\delta)$ drift bounds, and compositional stability for combined adapters.
Meanwhile, it can act as a feature selector by learning a sparse, horizon-aware mask over inputs to select important features, thereby improving interpretability.
In addition, it can also be used as a distribution calibrator to measure uncertainty. Thus, we introduce a Quantile Calibrator and a Conformal Corrector that together deliver calibrated, personalized intervals with finite-sample coverage.  
Our experiments across diverse backbones and datasets show that $\delta$-Adapter improves accuracy and calibration with negligible compute and no interface changes.
\end{abstract}

\section{Introduction}

Time Series Forecasting (TSF) powers decisions across energy \cite{1976_timeseries}, finance \cite{hyndman2018forecasting}, retail \cite{piccolo1990distance}, transportation \cite{gardner1985exponential}, and the sciences \cite{piccolo1990distance,gardner1985exponential}. Despite impressive gains from modern neural forecasters \cite{ekambaram2024tiny, hollmann2025tabpfn, liang2025distpred, liu2025sundial}, ranging from temporal convolutions \cite{lea2016temporal, wu2019graph,wu2022timesnet,li2023dynamic} and Transformers \cite{Zhou2021Informer,nie2022time, liu2022non,Yu2023PatchTST, wang2024timemixer, liu2023itransformer,ye2024frequency,wang2024timemixer, wang2024timexer} to hybrid statistical–neural models \cite{liu2025sundial, ekambaram2024tiny}, condition drift \cite{baier2020handling} is still not alleviated.
Conventional remedies, e.g., full fine-tuning, architectural changes, or ensembling, either demand substantial compute, risk destabilizing a hardened system, or complicate operations.
To cope with this, testing-time adaptation (TTA) is introduced into TSF.
The testing-time methods aim to mitigate test-time concept drift via selective layer retraining \cite{Chen2024SOLID}, online linear adapter updates \cite{Kim2025TAFAS}, auxiliary loss \cite{medeiros2025PETSA}, dynamic gating \cite{Grover2025DynaTTA}, parallel forecaster combines \cite{Lee2025ELF}, layer-wise adjustment and memory \cite{pham2023learning}, and dynamic model selection \cite{wen2023onenet}.
However, these methods rely, to varying degrees, on future labels for online model updates, thereby introducing label leakage, where future ground-truth labels are unavailable when actually applied, that causes model performance degradation \cite{liang2024act, lau2025dsof}.
Furthermore, LoRA-style adapters \cite{hu2022lora, pfeiffer2020mad, Xiang2021Prefix} in NLP tend to lead to high performance variance, since the output range is not fixed \cite{BidermanPOPGJKH24}.

Thus, TSF in real deployments still faces the last-mile gap:
1) Conditions drift \cite{baier2020handling}, which refers to gradual changes in the data-generating process (e.g., seasonal regime shifts, covariate shifts in demand patterns) that occur after the model has been deployed, making full retraining costly; 
2) High performance variance. Existing post-processing techniques are prone to have high performance variance due to unstable training;
3) Inefficient training/inference. Using complex modules or frequent updates to absorb low-complexity residuals \cite{vovk2017nonparametric, vovk2018cross} makes models suffer from inefficient training/inference.
Based on these, we ask a different question: \emph{Can we really keep the strong forecaster intact and learn only a tiny, post-hoc module that makes small targeted corrections, so accuracy and reliability improve without heavy retraining?}

We answer ``yes'' with $\delta$-Adapter, a lightweight, model-agnostic framework that augments a frozen forecaster $F$ by learning a tiny adapter $A$ in two minimal placements: input-side nudging (softly editing covariates before inference) and output-side correction (residual refinement after inference). Concretely, we instantiate additive or multiplicative forms for both placements, with a small trust-region parameter $\delta\in(0,1)$ that bounds edits for safety and stability. Since $A$ is a tiny network (e.g., shallow MLP or low-rank head) trained while $F$ remains frozen, it produces consistent gains with negligible training time and zero changes to $F$'s inference interface. 

Further, a key instantiation of the input adapter is a feature-selector (mask) adapter that learns a sparse, nearly binary, horizon-aware mask $M\in[0,1]^{L\times d}$ and applies it multiplicatively to the context $X' = X\odot M$. We train $M$ end-to-end with sparsity, temporal-smoothness, and budget regularizers so that the adapter preserves the base model's inductive biases while exposing the most consequential inputs for the frozen forecaster. This yields transparent selections, stable training, and strong empirical gains under tight compute budgets.  

Beyond point accuracy, $\delta$-Adapter also upgrades forecast uncertainty without modifying $F$. We present two distributional correctors: (i) a Quantile Calibrator that learns horizon-wise quantile functions as bounded offsets from the point forecast, with a monotonic parameterization and pinball-loss training augmented by reliability regularization; and (ii) a Conformal Calibrator that learns a scale function for normalized-residual conformal prediction, delivering finite-sample coverage with personalized, heteroscedastic intervals. Empirically, both calibrators achieve state-of-the-art coverage quality and produce tight, well-behaved intervals.

Through $\delta$-Adapter, this ``last-mile'' adjustment consistently improves forecasting accuracy in our experiments across diverse backbones and datasets, with negligible training time and no change to inference interfaces.
The main contributions are:
\begin{itemize} 
\item We formalize $\delta$-Adapter and instantiate two placements (input nudging and output residual correction) in additive/multiplicative forms, all drop-in and architecture-agnostic.
\item We introduce a learnable, budgeted mask that identifies and preserves the most consequential inputs, improving transparency and stability. 
\item  We propose quantile and conformal calibrators that deliver calibrated, heteroscedastic uncertainty with finite-sample coverage guarantees, all while keeping $F$ frozen. 
\item Across diverse backbones and benchmarks, $\delta$-Adapter improves accuracy and calibration; ablations illuminate the roles of $\delta$, capacity, horizon features, and residual structure.
\end{itemize}


\section{Methodology}
\label{sec_method}

\begin{figure*}[h]
  \centering
  \centerline{\includegraphics[width=0.9\textwidth]{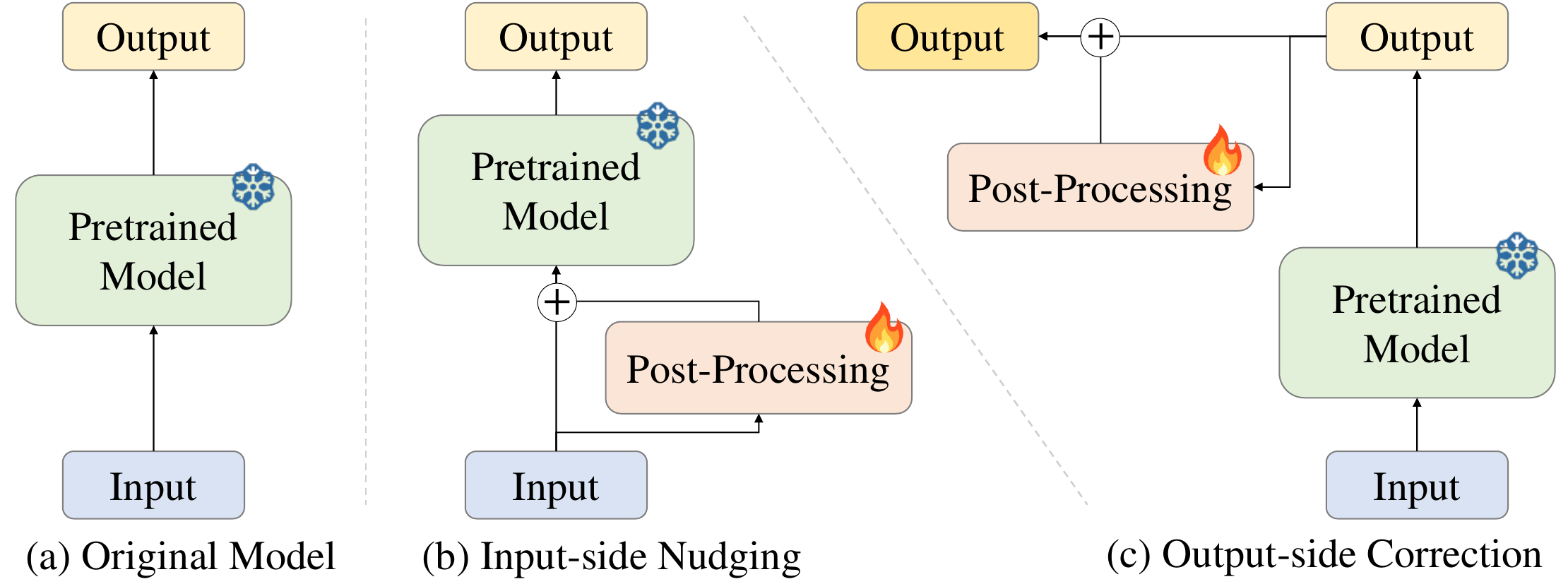}}
  \caption{$\delta$-Adapter performs input nudging and output correction on the frozen forecaster.}
  \label{fig_arch}
\end{figure*}


\subsection{Problem setup}

Let $\mathcal{D}=\{(X^{(i)}, Y^{(i)})\}_{i=1}^N$ denote training pairs of context windows $X\in\mathbb{R}^{L\times d}$ and future targets $Y\in\mathbb{R}^{H\times m}$ (history length $L$, horizon $H$, $d$ covariates, $m$ target dimensions). A pre-trained forecaster $F$ maps $X$ to predictions $\hat Y=F(X)\in\mathbb{R}^{H\times m}$. We keep all parameters of $F$ fixed and introduce a lightweight, learnable adapter $A_\theta$ with parameters $\theta$ trained on $\mathcal{D}$.
The adapter composes with $F$ via two families of edits:
\begin{subequations}\label{eq_1}
    \renewcommand{\theequation}{\theparentequation.\arabic{equation}}
        \begin{align}
        \textbf{Input-side nudging:}  & \quad  \tilde X = X + \delta\, A_\theta^{\text{in}}(X),                      &&  \text{(additive input)} \label{eq_1_1}\\
                                      &  \quad \tilde X = X \odot \bigl(1 + \delta\, A_\theta^{\text{in}}(X)\bigr),     && \text{(multiplicative input)} \label{eq_1_2} \\
        \textbf{Output-side correction:}  & \quad \tilde Y = F(X) + \delta\, A_\theta^{\text{out}}(F(X), X),                        && \text{(additive output)} \label{eq_1_3} \\
                                          & \quad  \tilde Y = F(X) \odot \bigl(1 + \delta\, A_\theta^{\text{out}}(F(X), X)\bigr),   && \text{(multiplicative output)} \label{eq_1_4}
        \end{align}
\end{subequations}
The base risk of $F$ under a loss $\ell$ is
\begin{equation}
    \mathcal R(F)  =  \mathbb E_{(X,y)\sim\mathcal D} \left[\ell\big(\tilde Y, Y\big)\right].
\end{equation}
Here, we consider two adapters, trained by minimizing empirical risk over $\theta$ with $F$ frozen, as shown in Eq. \ref{eq_1}. 
The key questions are: (i) when does a small $\delta$ provably help; (ii) why do lightweight adapters suffice; and (iii) How do we choose $\delta$ and what is the stability of the adapter $A$?
Now, let's answer these questions.

\subsection{Output-side adapters as shrinkage residual learning}

Here, we consider the additive adapter, as shown in Eq. \ref{eq_1_3}: $\tilde Y = F(X) + \delta A_\theta^{\text{out}}(F(X), X)$. With slight modifications, the relevant analyses and theories also apply to multiplicative adapters.

Let $r(X)=Y-F(X)$ denote the residual process. For squared error $\ell(\hat Y, Y)=\tfrac12\|\hat Y - Y\|_2^2$, the population risk of the output adapter with a fixed $F$ equals
\begin{equation}
\mathcal R_{\text{out}}(\delta) 
= \tfrac12  \mathbb E \left[\big\|r(X)-\delta g(X)\big\|_2^2\right], \quad g(X):=A_\theta^{\text{out}}(F(X), X).
\end{equation}
Expanding,
\begin{equation}
\mathcal R_{\text{out}}(\delta)
= \tfrac12 \mathbb E \left[\|r\|^2\right]
- \delta \underbrace{\mathbb E \left[\langle r, g\rangle\right]}_{\text{signal alignment}}
+ \tfrac12 \delta^2\mathbb E \left[\|g\|^2\right].
\end{equation}

\begin{proposition} [Small-step improvement] \label{prop_1}
If $\mathbb E[\langle r,g\rangle] > 0$, then for all
\begin{equation}
0<\delta< \frac{2 \mathbb E[\langle r,g\rangle]}{\mathbb E[\|g\|^2]},
\end{equation}
\end{proposition}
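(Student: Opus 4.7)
The plan is to read off the conclusion directly from the quadratic-in-$\delta$ expansion of $\mathcal{R}_{\text{out}}$ that already appears above the statement. Concretely, define $\Delta(\delta) := \mathcal{R}_{\text{out}}(\delta) - \mathcal{R}_{\text{out}}(0)$. From the displayed expansion,
\begin{equation}
\Delta(\delta) = -\delta\,\mathbb{E}[\langle r, g\rangle] + \tfrac{1}{2}\delta^{2}\,\mathbb{E}[\|g\|^{2}],
\end{equation}
so $\Delta$ is a simple univariate quadratic in $\delta$ with a nonnegative leading coefficient. First I would treat the degenerate case $\mathbb{E}[\|g\|^{2}]=0$: then $g\equiv 0$ almost surely, which forces $\mathbb{E}[\langle r,g\rangle]=0$, contradicting the hypothesis, so we may assume $\mathbb{E}[\|g\|^{2}]>0$ and the stated upper bound on $\delta$ is a well-defined positive number.

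Next I would factor: $\Delta(\delta) = \tfrac{\delta}{2}\bigl(\delta\,\mathbb{E}[\|g\|^{2}] - 2\,\mathbb{E}[\langle r,g\rangle]\bigr)$. For $\delta>0$, the sign of $\Delta(\delta)$ is the sign of the second factor, which is strictly negative exactly on the interval $0<\delta<2\,\mathbb{E}[\langle r,g\rangle]/\mathbb{E}[\|g\|^{2}]$. This gives $\mathcal{R}_{\text{out}}(\delta) < \mathcal{R}_{\text{out}}(0)$ throughout the stated range, which is the small-step improvement claim. I would then note, as a free byproduct of completing the square, the optimal step
\begin{equation}
\delta^{\star} = \frac{\mathbb{E}[\langle r,g\rangle]}{\mathbb{E}[\|g\|^{2}]}, \qquad \mathcal{R}_{\text{out}}(0) - \mathcal{R}_{\text{out}}(\delta^{\star}) = \frac{\bigl(\mathbb{E}[\langle r,g\rangle]\bigr)^{2}}{2\,\mathbb{E}[\|g\|^{2}]},
\end{equation}
which interprets the improvement as a Cauchy--Schwarz-type shrinkage gain controlled by the alignment between the residual $r$ and the adapter output $g$.

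There is no real mathematical obstacle here; the content is a one-dimensional quadratic minimization and the hypothesis $\mathbb{E}[\langle r,g\rangle]>0$ is precisely what makes the linear term dominate at small $\delta$. The only mild care needed in the write-up is (i) justifying integrability so that the expansion is valid (square-integrability of $r$ and of $g$, which we may take as a standing assumption since $\mathcal{R}_{\text{out}}$ is assumed finite and the adapter is bounded by construction via the trust-region parameter), and (ii) flagging that the result is a \emph{local} statement about the population risk with $F$ frozen, so it does not by itself guarantee improvement of the empirical estimator; passing from population to sample will be handled separately via the drift bounds announced in the introduction.
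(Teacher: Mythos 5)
Your proposal is correct and follows essentially the same route as the paper's proof: expand the square to get the quadratic $\tfrac12\mathbb{E}\|r\|^2-\delta\,\mathbb{E}[\langle r,g\rangle]+\tfrac12\delta^2\,\mathbb{E}[\|g\|^2]$, observe strict improvement on $\bigl(0,\,2\mathbb{E}[\langle r,g\rangle]/\mathbb{E}[\|g\|^2]\bigr)$, and read off the minimizer $\delta^\star$ and the gain $\tfrac12\,(\mathbb{E}[\langle r,g\rangle])^2/\mathbb{E}[\|g\|^2]$. Your handling of the degenerate case $\mathbb{E}[\|g\|^2]=0$ (as contradicting the alignment hypothesis, rather than the paper's remark that the risk is then constant) and your explicit integrability caveat are minor presentational differences, not a different argument.
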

we have $\mathcal R_{\text{out}}(\delta)<\mathcal R_{\text{out}}(0)=\tfrac12\mathbb E[\|r\|^2]$.
The quadratic in $\delta$ has negative derivative at $0$ and a unique minimizer $\delta^\star=\frac{\mathbb E[\langle r,g\rangle]}{\mathbb E[\|g\|^2]}$.

\begin{remark}
Improvement hinges on alignment between the learned correction $g$ and the residual $r$. Even when $A$ is tiny, if residuals have low-complexity structure (calendar offsets, horizon-dependent bias, scale drift), a small $g$ can achieve positive alignment, and a shrunken step $\delta$ guarantees risk reduction. This is exactly the first step of boosting with shrinkage or a stacked residual learner with a conservative learning rate.
\end{remark}

In practice, we learn $g$ from finite data with a penalty $\Omega(\theta)$ (e.g., $\ell_2$, low rank, sparsity). The empirical objective
\begin{equation}
\min_{\theta} \tfrac12\sum_{i}\big\|y_i-F(X_i)-\delta g_\theta(X_i)\big\|^2+\lambda \Omega(\theta)
\end{equation}
yields a shrunken projection of residuals onto the function class of $A$. With small $\delta$ and a low-capacity $A$, we target the dominant residual modes while avoiding variance blow-up.

\subsection{Input-side adapters via first-order linearization}

For the input-nudging adapter, as shown in Eq. \ref{eq_1_1}: $\tilde X = X + \delta A_\theta^{\text{in}}(X)$, apply a first-order expansion of $F$ around $X$:
\begin{equation}
    F \big(X+\delta u(X)\big) \approx  F(X)  +  \delta J_F(X) u(X),
\end{equation}
where $u(X):=A(X, h)$ and $J_F(X)\in\mathbb R^{H\times d}$ is the Jacobian of $F$ w.r.t. inputs. Under squared loss, replacing $g$ by $J_F u$ in the previous derivation yields
\begin{equation}
\mathcal R_{\text{in}}(\delta)  \approx  \tfrac12 \mathbb E \left[\|r\|^2\right]
- \delta \mathbb E \left[\langle r, J_F u\rangle\right]
+ \tfrac12\delta^2 \mathbb E \left[\|J_F u\|^2\right].
\end{equation}
In general, for $ \hat y_{\mathrm{in}}(X;\delta)  =  F \big(X+\delta u(X)\big), \quad \mathcal R_{\mathrm{in}}(\delta) = \tfrac12\mathbb E \left[\big\|y-\hat y_{\mathrm{in}}(X;\delta)\big\|_2^{2}\right]$, we have
\begin{proposition}[General $\delta$-step improvement]\label{prop_3}
If $\mathbb E[\langle r, J_F u\rangle]>0$, then there exists $\delta>0$ such that $\mathcal R_{\mathrm{in}}(\delta)<\mathcal R_{\mathrm{in}}(0)$ for all $\delta\in(0,\delta]$. And, if $F$ is affine in the near of $X$, Prop. \ref{prop_1} is also hold.
\end{proposition}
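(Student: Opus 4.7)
The plan is to treat $\mathcal R_{\mathrm{in}}(\delta)$ as a scalar function of $\delta$ and argue via its first-order expansion at $0$. First I would write
\[
\mathcal R_{\mathrm{in}}(\delta)-\mathcal R_{\mathrm{in}}(0)
= \mathbb E\!\left[\tfrac12\|y-F(X+\delta u(X))\|_2^2-\tfrac12\|y-F(X)\|_2^2\right]
\]
and compute the right derivative at $\delta=0$. Assuming $F$ is differentiable and that the pointwise derivative $X\mapsto J_F(X)u(X)$ together with an integrable local Lipschitz envelope exists, dominated convergence lets me interchange $d/d\delta$ and $\mathbb E$. The chain rule then gives
\[
\left.\frac{d}{d\delta}\mathcal R_{\mathrm{in}}(\delta)\right|_{\delta=0}
= -\,\mathbb E\!\left[\langle r(X),\, J_F(X)\,u(X)\rangle\right] \;<\;0,
\]
by hypothesis. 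Continuity of this derivative in $\delta$ (again under standard integrability on the second derivative or Lipschitz Jacobian) yields an open interval $(0,\bar\delta]$ on which the derivative stays negative, and the fundamental theorem of calculus delivers $\mathcal R_{\mathrm{in}}(\delta)<\mathcal R_{\mathrm{in}}(0)$ for every $\delta\in(0,\bar\delta]$, which is the first claim.

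For the affine case I would observe that if $F$ is affine in a neighborhood of $X$, say $F(X+\delta u(X))=F(X)+\delta J_F(X)u(X)$ exactly for $\delta$ in some range, then setting $g(X):=J_F(X)u(X)$ recovers the exact output-additive form $\tilde Y=F(X)+\delta g(X)$ studied in Section~2.2. Substituting this $g$ into Proposition~\ref{prop_1} gives both strict improvement and the explicit admissible interval $0<\delta<2\,\mathbb E[\langle r,g\rangle]/\mathbb E[\|g\|^2]$ together with the unique minimizer $\delta^\star=\mathbb E[\langle r,g\rangle]/\mathbb E[\|g\|^2]$, matching the second assertion of the proposition.

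The main obstacle I anticipate is not the algebra but the regularity bookkeeping needed to turn the formal expansion into a uniform bound: one must control the second-order remainder $F(X+\delta u)-F(X)-\delta J_F(X)u(X)$ in $L^2$ uniformly in $X$, so that the quadratic-in-$\delta$ upper bound used to extract $\bar\delta$ holds in expectation rather than only pointwise. A clean sufficient condition is that $J_F$ is locally Lipschitz with an envelope $L(X)$ and that $\mathbb E[L(X)\,\|u(X)\|^2]<\infty$ and $\mathbb E[\|r(X)\|\,\|u(X)\|^2]<\infty$; under these, the remainder is $O(\delta^2)$ in $\mathbb E$, so $\bar\delta$ can be read off explicitly as $\bar\delta\le \mathbb E[\langle r,J_Fu\rangle]/C$ for a constant $C$ depending on the envelopes. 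I would state this as a short regularity remark before invoking the continuity argument, so the proof itself reduces to the two-line derivative computation and the affine-case reduction to Proposition~\ref{prop_1}.
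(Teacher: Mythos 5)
Your proposal is correct and follows essentially the same route as the paper's own proof: differentiate under the expectation (justified by domination) to get $\mathcal R_{\mathrm{in}}'(0)=-\,\mathbb E[\langle r,J_F u\rangle]<0$, use continuity of the derivative plus the fundamental theorem of calculus to obtain strict decrease on an interval $(0,\bar\delta]$, and in the affine case reduce exactly to the quadratic analysis of Proposition~\ref{prop_1} with $g=J_F u$, yielding the same admissible interval and minimizer $\delta^\star=\mathbb E[\langle r,J_F u\rangle]/\mathbb E[\|J_F u\|^2]$. Your closing regularity remark (Lipschitz Jacobian envelope controlling the second-order remainder in expectation) matches the domination/continuity assumptions the paper invokes, so there is no substantive difference.
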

The proof is given in Appendix \ref{app_prop_3}. This proposition states that for a differentiable loss $\ell$, the loss gradient w.r.t. inputs satisfies $\nabla_x \ell(F(X),y) = J_F(X)^\top \nabla_{\hat y}\ell$. Choosing $u(X)\approx -B \nabla_x \ell(F(X),y)$ for a small, learned preconditioner $B$ recovers a learned, damped gradient step in input space; training $A$ on data finds such steps implicitly without computing $J_F^\top$ at test time.

\section{The stability of $\delta$-Adapter}


\subsection{Prediction stability under bounded input edits}

Let $\tilde X = X + \delta A^{\text{in}}_\phi(X)$ (additive case). Then, we have
\begin{proposition}[Drift bound] \label{prop_4}
Assume the frozen forecaster $F$ is $L_F$-Lipschitz, the change in prediction is bounded by
    \begin{equation}
        \|\tilde Y - \hat Y\| \le \delta L_F \|A^{\text{in}}_\phi(X)\| \le \delta L_F \sqrt{Ld}.
    \end{equation}
\end{proposition}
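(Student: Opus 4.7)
The proof plan is essentially a two-line application of Lipschitz continuity followed by a norm bound on the adapter output, so I would keep it compact and make the implicit boundedness assumption on $A^{\text{in}}_\phi$ explicit.

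\textbf{Step 1 (Lipschitz step).} Since $\hat Y = F(X)$ and $\tilde Y = F(\tilde X) = F(X + \delta A^{\text{in}}_\phi(X))$, I would directly invoke the assumed $L_F$-Lipschitz property of $F$ (under a compatible norm on $\mathbb{R}^{L\times d}$, e.g., Frobenius):
\begin{equation*}
\|\tilde Y - \hat Y\| \;=\; \|F(\tilde X) - F(X)\| \;\le\; L_F\,\|\tilde X - X\| \;=\; \delta L_F\,\|A^{\text{in}}_\phi(X)\|,
\end{equation*}
where the last equality uses $\delta \ge 0$ so that the scalar pulls out of the norm. This gives the first inequality.

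\textbf{Step 2 (Adapter output bound).} For the second inequality, I need $\|A^{\text{in}}_\phi(X)\| \le \sqrt{Ld}$. This requires that the adapter output be a matrix in $\mathbb{R}^{L\times d}$ whose entries lie in $[-1,1]$; this is the standard design choice implied by the trust-region framing ($\delta\in(0,1)$ as a bounded edit), and is realized in practice by using a bounded activation such as $\tanh$ on the adapter's output layer. I would state this as an explicit standing assumption (or cite the definition of $A^{\text{in}}_\phi$ in Section \ref{sec_method}), and then the bound on the Frobenius norm is immediate:
\begin{equation*}
\|A^{\text{in}}_\phi(X)\|_F \;=\; \Big(\sum_{t=1}^{L}\sum_{j=1}^{d} [A^{\text{in}}_\phi(X)]_{t,j}^{2}\Big)^{1/2} \;\le\; \sqrt{Ld}.
\end{equation*}
Chaining this with Step 1 yields the stated inequality.

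\textbf{Main obstacle and caveats.} The mathematics is routine; the only real issue is making sure the norms are consistent and that the entrywise boundedness of $A^{\text{in}}_\phi$ is either built into the definition or added as a hypothesis. If the adapter uses an unbounded head, one would instead have to carry $\|A^{\text{in}}_\phi(X)\|$ through as an unspecified quantity and lose the clean $\sqrt{Ld}$ constant. I would also briefly remark that an analogous bound holds for the multiplicative input form \ref{eq_1_2} with an extra $\|X\|_\infty$ factor, since then $\tilde X - X = \delta X \odot A^{\text{in}}_\phi(X)$, which makes the stability guarantee degrade gracefully with the scale of the covariates.
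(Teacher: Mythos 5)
Your proof is correct and follows essentially the same route as the paper's: one application of the $L_F$-Lipschitz property to get $\delta L_F\|A^{\text{in}}_\phi(X)\|$, then the entrywise bound $\|A^{\text{in}}_\phi(X)\|_\infty\le 1$ (enforced in the paper via tanh squashing) to obtain the $\sqrt{Ld}$ factor. Your explicit flagging of the boundedness hypothesis matches the assumption the paper invokes from its implementation section, so there is no gap.
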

The proof is given in Appendix \ref{app_prop_4}.
Further, let $\tilde X = X\odot \exp(\delta A^{\text{in}}_\phi(X))$, we have
\begin{corollary}[Multiplicative input edits] \label{col_1}
If $\|X\|_\infty\le B_X$, then
\begin{equation}
\|\tilde Y-\hat Y\| \le \delta e^{\delta} L_F B_X \|A^{\text{in}}_\phi(X)\|.
\end{equation}
In particular, for $\delta\le 1$, $\|\tilde Y-\hat Y\| = O(\delta)$.
\end{corollary}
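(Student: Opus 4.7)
\textbf{Proof plan for Corollary \ref{col_1}.} The plan is to reduce the multiplicative input edit to an additive perturbation whose norm is controlled by an elementary scalar inequality, and then to apply the Lipschitz drift bound already established in Proposition \ref{prop_4}.

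First, I would write the edit as an additive perturbation,
\begin{equation}
\tilde X - X \;=\; X \odot \bigl(\exp(\delta A^{\text{in}}_\phi(X)) - \vone\bigr),
\end{equation}
so that Lipschitzness of $F$ gives
\begin{equation}
\|\tilde Y - \hat Y\| \;=\; \|F(\tilde X) - F(X)\| \;\le\; L_F \,\|\tilde X - X\|.
\end{equation}
Bounding the right-hand side is now the only remaining task.

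Second, I would control the entrywise exponential gap using the elementary inequality $|e^z - 1| \le |z|\,e^{|z|}$, which follows from $e^z - 1 = \int_0^z e^t\,dt$. Under the trust-region convention that the adapter head is bounded so that $\|A^{\text{in}}_\phi(X)\|_\infty \le 1$ (consistent with the small, bounded modules used elsewhere in the paper), each coordinate satisfies $|\exp(\delta A_i) - 1| \le \delta |A_i| e^{\delta}$. Combining this with $|X_i| \le B_X$, squaring, and summing coordinates yields
\begin{equation}
\|\tilde X - X\|^2 \;=\; \sum_i X_i^{2}\bigl(\exp(\delta A_i) - 1\bigr)^{2} \;\le\; B_X^{2}\,\delta^{2}\,e^{2\delta}\,\|A^{\text{in}}_\phi(X)\|^{2}.
\end{equation}
Taking square roots and chaining with the Lipschitz step gives the stated bound $\|\tilde Y - \hat Y\| \le \delta e^{\delta} L_F B_X \|A^{\text{in}}_\phi(X)\|$. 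For the asymptotic claim, restricting to $\delta \le 1$ makes $e^\delta \le e$, a constant factor absorbed into $O(\cdot)$, so the drift is $O(\delta)$ as $\delta \to 0$.

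The main obstacle is not analytical but conventional: one must be explicit about what bounds the adapter output. If $\|A^{\text{in}}_\phi\|_\infty \le 1$ is enforced by a squashing nonlinearity (as is typical for trust-region-style heads), the argument is a two-line exercise. Without such a bound the sharper inequality gives an exponent $\delta\,\|A^{\text{in}}_\phi\|_\infty$ rather than $\delta$, and the corollary would need to be restated in that form; clarifying this assumption is the only subtle point of the proof.
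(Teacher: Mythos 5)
Your proof is correct and follows essentially the same route as the paper's: reduce the multiplicative edit to an additive perturbation, bound each coordinate of $\exp(\delta A_i)-1$ by $\delta|A_i|e^{\delta}$ using the bound $\|A^{\text{in}}_\phi(X)\|_\infty\le 1$ (the paper uses the mean value theorem where you use the integral inequality, an immaterial difference), sum squares, and chain with the Lipschitz step of Proposition \ref{prop_4}. Your closing caveat about the $\|A^{\text{in}}_\phi\|_\infty\le 1$ convention matches the paper, which enforces it via tanh squashing and invokes it explicitly in the appendix proof.
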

The proof is given in Appendix \ref{app_col_1}.
Corollary \ref{col_1} means that small $\delta$ yields Lipschitz-stable prediction changes for input adapters.

\subsection{Loss stability and guaranteed local improvement}

Let $\hat Y=F(X)$ and consider an output edit $\tilde Y=\hat Y + \delta d$ with $d:=A^{\text{out}}_\phi(\hat Y,X)$, we have
\begin{equation}
\ell(\tilde Y,y) \le \ell(\hat Y,y) + \delta\langle g, d\rangle + \frac{\beta}{2}\delta^2\|d\|^2,
\quad g:=\nabla_u \ell(u,y)\big|_{u=\hat Y}.
\end{equation}
If $d$ aligns with $-g$, i.e. $\langle g,d\rangle \le -\alpha \|g\|\|d\|$, we get

\begin{theorem}[Descent for output adapters] \label{th_1}
If the per-sample prediction loss $\ell(\cdot,y)$ is $\beta$-smooth in its first argument (e.g., MSE, Huber), for any sample, 
\begin{equation}
\ell(\tilde Y,y)-\ell(\hat Y,y)
\le -\delta \alpha \|g\|\|d\| + \frac{\beta}{2}\delta^2\|d\|^2.
\end{equation}
Hence, for any $\delta\in\bigl(0,\tfrac{2\alpha\|g\|}{\beta\|d\|}\bigr)$, the loss strictly decreases. The optimal $\delta^\star=\frac{\alpha\|g\|}{\beta\|d\|}$ yields
\begin{equation}
\ell(\tilde Y,y)-\ell(\hat Y,y) \le -\frac{\alpha^2}{2\beta}\|g\|^2.
\end{equation}
\end{theorem}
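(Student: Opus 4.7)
The plan is to invoke the $\beta$-smoothness descent lemma at the base prediction $\hat Y$, combine it with the alignment hypothesis on $d$, and then optimize the resulting univariate quadratic in $\delta$. This is essentially a one-step instance of the classical gradient-descent sufficient-decrease argument, transplanted to the adapter setting.

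First, I would apply the standard descent lemma for $\beta$-smooth functions to $\ell(\cdot, y)$ at $u = \hat Y$ with perturbation $h = \delta d$. Since $g = \nabla_u \ell(u,y)\big|_{u=\hat Y}$, this yields $\ell(\tilde Y, y) \le \ell(\hat Y, y) + \delta \langle g, d\rangle + \tfrac{\beta}{2}\delta^2 \|d\|^2$, which is precisely the displayed inequality stated in the preamble to the theorem. Next, I would substitute the alignment hypothesis $\langle g, d\rangle \le -\alpha \|g\|\|d\|$ directly into this bound to obtain $\ell(\tilde Y, y) - \ell(\hat Y, y) \le -\delta \alpha \|g\|\|d\| + \tfrac{\beta}{2}\delta^2 \|d\|^2$, which is the first display of the theorem.

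The remaining work is purely one-variable calculus. I would view the right-hand side as a quadratic $q(\delta) = -\delta \alpha \|g\|\|d\| + \tfrac{\beta}{2}\delta^2 \|d\|^2 = \delta\|d\|\bigl(-\alpha\|g\| + \tfrac{\beta}{2}\delta\|d\|\bigr)$, which is strictly negative exactly on the open interval $\delta \in \bigl(0,\, 2\alpha\|g\|/(\beta\|d\|)\bigr)$, establishing the claimed admissible range for strict descent. Setting $q'(\delta) = 0$ then gives the stationary point $\delta^\star = \alpha\|g\|/(\beta\|d\|)$, and substituting back produces $q(\delta^\star) = -\alpha^2\|g\|^2/(2\beta)$, matching the closed-form guaranteed improvement.

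The main obstacle, such as it is, is bookkeeping rather than mathematics: I would need to flag the degenerate cases $\|g\| = 0$ (the forecaster is already a stationary point of the per-sample loss, so neither improvement nor a well-defined $\delta^\star$ exists) and $\|d\| = 0$ (the adapter emits the zero correction, so the inequalities hold vacuously with $\tilde Y = \hat Y$). Outside these edge cases, the entire argument collapses to one application of smoothness plus one elementary quadratic minimization, so no deeper technical difficulty arises.
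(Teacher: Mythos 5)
Your proposal is correct and follows exactly the same route as the paper's proof: the $\beta$-smoothness descent lemma at $\hat Y$ with step $\delta d$, substitution of the alignment bound $\langle g,d\rangle \le -\alpha\|g\|\|d\|$, and minimization of the resulting quadratic in $\delta$ to obtain the interval of strict descent and $\delta^\star=\alpha\|g\|/(\beta\|d\|)$ with margin $-\alpha^2\|g\|^2/(2\beta)$. Your explicit flagging of the degenerate cases $\|g\|=0$ and $\|d\|=0$ is a minor, harmless addition beyond the paper's argument.
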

The proof is given in Appendix \ref{app_th_1}.
\begin{remark}
    With MSE, $g=\hat Y-y$, so the improvement is proportional to the squared residual magnitude.
    Further, with a bounded adapter family, the trained $A^{\text{out}}$ (minimizing batch loss) produces $d$ that correlates with $-g$ unless capacity is zero.
\end{remark}

\begin{theorem}[Descent for input adapters] \label{th_2}
Let $\tilde X=X+\delta v$ with $v:=A^{\text{in}}_\phi(X)$. Assume $F$ is differentiable at $X$ with Jacobian $J_F(X)$. Define the effective prediction step $s:=J_F(X)v$. Then for $\delta$ small,
\begin{equation}
\ell(F(\tilde X),y)
\le \ell(\hat Y,y) + \delta\langle g,s\rangle + \frac{\beta}{2}\delta^2\|d\|^2 + O(\delta^2).
\end{equation}
If $\langle g,s\rangle\le -\alpha\|g\|\|s\|$, there exists $\bar\delta>0$ such that $\forall\delta\in(0,\bar\delta)$ the loss strictly decreases. Moreover, optimizing the quadratic upper bound in $\delta$ yields the same margin as Theorem \ref{th_1} up to $O(1)$ terms.
\end{theorem}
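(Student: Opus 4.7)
The plan is to reduce Theorem~\ref{th_2} to the output-side descent of Theorem~\ref{th_1} by linearizing the frozen forecaster $F$ around $X$, treating the induced output perturbation as an effective output edit, and then invoking the same $\beta$-smoothness argument already used in the output-adapter case.

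First I would Taylor-expand $F$ at $X$. Under the implicit regularity hypothesis that $J_F$ is locally Lipschitz in a neighborhood of $X$ (a local $C^{1,1}$ condition; mere differentiability at $X$ alone only gives an $o(\delta)$ remainder), we may write
\begin{equation*}
F(\tilde X) \;=\; \hat Y + \delta\, J_F(X)\,v + R_F(\delta), \qquad \|R_F(\delta)\| = O(\delta^2),
\end{equation*}
so the effective output-space perturbation is $\Delta := \delta s + R_F(\delta)$ with $s = J_F(X)v$ exactly as defined in the statement. I read the $\|d\|^2$ appearing in the claimed bound as a typographical shorthand for $\|s\|^2$, since no $d$ is introduced in this input-side setting.

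Next I would apply $\beta$-smoothness of $\ell(\cdot,y)$ at $\hat Y$ to the perturbed point $\hat Y+\Delta$,
\begin{equation*}
\ell\!\bigl(F(\tilde X),y\bigr) \;\le\; \ell(\hat Y,y) + \langle g,\Delta\rangle + \tfrac{\beta}{2}\|\Delta\|^2,
\end{equation*}
and expand: by Cauchy--Schwarz $\langle g,\Delta\rangle = \delta\langle g,s\rangle + O(\delta^2)$, and $\|\Delta\|^2 = \delta^2\|s\|^2 + O(\delta^3)$. Collecting the leading pieces and absorbing all higher-order cross terms into the stated $O(\delta^2)$ slack reproduces the claimed inequality. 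From there the descent conclusion mirrors Theorem~\ref{th_1}: under $\langle g,s\rangle\le -\alpha\|g\|\|s\|$ the right-hand side becomes $\ell(\hat Y,y) - \delta\alpha\|g\|\|s\| + \tfrac{\beta}{2}\delta^2\|s\|^2 + O(\delta^2)$, whose linear term dominates for all sufficiently small $\delta$; minimizing the dominant quadratic gives $\delta^\star = \alpha\|g\|/(\beta\|s\|)$ and margin $-\alpha^2\|g\|^2/(2\beta)$, matching Theorem~\ref{th_1} up to the $O(1)$ absorbed-constants slack noted in the statement.

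The main obstacle is the regularity hypothesis needed to pin the trailing term as $O(\delta^2)$ rather than merely $o(\delta)$: the quantitative margin requires a locally Lipschitz Jacobian, whereas the purely qualitative ``there exists $\bar\delta>0$'' descent claim goes through under differentiability at $X$ alone. For standard backbones built from smooth (or almost-everywhere smooth) activations with bounded weights, local Lipschitzness of $J_F$ holds on a dense set, so the stronger $O(\delta^2)$ form is effectively safe in practice. I would therefore state a local $C^{1,1}$ assumption on $F$ at the top of the proof to make the quantitative step rigorous, and otherwise flag that the existence-of-$\bar\delta$ claim holds under the weaker $C^1$ assumption alone.
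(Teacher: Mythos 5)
Your proof follows essentially the same route as the paper's: a first-order Taylor expansion of $F$ at $X$, treating $\Delta=\delta s+R_F(\delta)$ as an effective output edit, applying $\beta$-smoothness of $\ell$, and optimizing the resulting quadratic to recover $\delta^\star=\alpha\|g\|/(\beta\|s\|)$ and the margin $-\tfrac{\alpha^2}{2\beta}\|g\|^2$; reading the $\|d\|^2$ in the statement as $\|s\|^2$ also matches what the paper actually derives. If anything you are more careful than the paper, which labels the Taylor remainder $O(\delta)$ yet silently absorbs $\langle g,r_F(\delta)\rangle$ into the $O(\delta^2)$ slack — your explicit local $C^{1,1}$ assumption for the quantitative bound, with the $o(\delta)$ fallback giving only the qualitative existence of $\bar\delta$ under mere differentiability, is exactly the fix that step needs.
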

The proof is given in Appendix \ref{app_th_2}.
Theorems \ref{th_1} and \ref{th_2} show that for sufficiently small $\delta$ and mild alignment, both adapter types reduce the loss locally, with explicit improvement margins.

\subsection{Compositional stability (input + output)}

Let the full edit be $\tilde X=X+\delta v$, $\hat Y' = F(\tilde X)$, then $\tilde Y=\hat Y' + \delta d(\hat Y',X)$. Under the same conditions as Prop. \ref{prop_4} and Theorems \ref{th_1} and \ref{th_2}, we have:

\begin{proposition}[Composite drift and loss bound] \label{prop_5}
    \begin{equation}
    \|\tilde Y-\hat Y\| \le \|\hat Y'-\hat Y\| + \delta\|d(\hat Y',X)\|
    \le \delta L_F\|v\| + \delta C_d,
    \end{equation}
so the model drift is $O(\delta)$. Further, for the loss, 
    \begin{equation}
    \ell(\tilde Y,y) \le \ell(\hat Y,y) + \delta\langle g, s+d\rangle + \tfrac{\beta}{2}\delta^2\|s+d\|^2 + O(\delta^2),
    \end{equation}
\end{proposition}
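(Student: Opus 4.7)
The plan is to split Proposition~5 into its two assertions and reuse the earlier drift lemma and descent theorems essentially as black boxes, so the work reduces to a clean triangle inequality for the drift and a careful single Taylor expansion for the loss.

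For the composite drift bound I would first write $\tilde Y - \hat Y = (\hat Y' - \hat Y) + \delta\, d(\hat Y',X)$ and apply the triangle inequality. The first summand is handled by Proposition~\ref{prop_4} applied to the input edit $\tilde X = X + \delta v$, giving $\|\hat Y' - \hat Y\| = \|F(\tilde X) - F(X)\| \le L_F\|\tilde X - X\| = \delta L_F\|v\|$. For the second summand I would assume (as in the hypotheses inherited from Theorem~\ref{th_1}) that the output adapter lies in a bounded family, so that $\|d(\hat Y',X)\| \le C_d$ uniformly in its arguments. Adding the two estimates yields the advertised $\delta L_F\|v\| + \delta C_d = O(\delta)$ bound.

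For the loss bound I would linearize once around $\hat Y$ using $\beta$-smoothness:
\begin{equation}
\ell(\tilde Y, y) \le \ell(\hat Y, y) + \langle g, \tilde Y - \hat Y\rangle + \tfrac{\beta}{2}\|\tilde Y - \hat Y\|^2,
\end{equation}
with $g = \nabla_u \ell(u,y)|_{u=\hat Y}$. Next I would substitute the first-order expansion of $F$ used in Theorem~\ref{th_2}, $\hat Y' - \hat Y = \delta J_F(X)v + O(\delta^2) = \delta s + O(\delta^2)$, to obtain $\tilde Y - \hat Y = \delta(s + d(\hat Y',X)) + O(\delta^2)$. Plugging this into the smoothness inequality, expanding $\|\delta(s+d) + O(\delta^2)\|^2$, and collecting every term of order higher than $\delta^2$ into the remainder gives exactly $\ell(\hat Y,y) + \delta\langle g, s+d\rangle + \tfrac{\beta}{2}\delta^2\|s+d\|^2 + O(\delta^2)$.

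The main obstacle is the bootstrapping issue: the output adapter is evaluated at the perturbed prediction $\hat Y'$, not at $\hat Y$, so a priori $d(\hat Y',X) = d(\hat Y,X) + O(\delta)$ only if $d$ is Lipschitz in its first argument. I would resolve this by either (i) assuming local Lipschitzness of $A^{\text{out}}_\phi$, which is natural for the shallow-MLP/low-rank instantiations used throughout the paper, so that the replacement introduces only an extra $O(\delta)$ term inside the $\delta\langle g, \cdot\rangle$ factor and hence an $O(\delta^2)$ contribution that is absorbed into the remainder, or (ii) simply interpreting $d$ in the statement as $d(\hat Y',X)$ without replacement, in which case no additional regularity is needed and the bound holds verbatim. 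Either route preserves the $O(\delta^2)$ remainder and shows that composing input and output adapters inherits the same small-step stability and local descent behavior established individually in Proposition~\ref{prop_4} and Theorems~\ref{th_1}--\ref{th_2}.
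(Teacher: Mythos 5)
Your proposal is correct and follows essentially the same route as the paper: the drift bound via the triangle inequality, the Lipschitz estimate from Proposition~\ref{prop_4}, and boundedness of the output adapter; the loss bound via a single $\beta$-smoothness expansion around $\hat Y$ with $\tilde Y-\hat Y=\delta(s+d)+r_F(\delta)$ and the Taylor remainder absorbed into the $O(\delta^2)$ term. Your option (ii) for handling $d$ (keeping it evaluated at $\hat Y'$ without replacement) is exactly how the paper's proof treats it, so no extra Lipschitz assumption is needed.
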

The proof is given in Appendix \ref{app_prop_5}.
If the combined step $s+d$ aligns with $-g$ by parameter-sharing or a learned gate, we inherit the same descent guarantee as Theorem \ref{th_1}.

\section{Implementation}

\subsection{$\delta$-Adapter }

$\delta$-Adapter targets structured residuals (bias, scale miscalibration, phase lag) while preserving $F$'s inductive biases. We encode this through three principles:
1) Boundedness: Enforce small edits via $\delta$ and penalties on $\|A_\theta(\cdot)\|$; 2) Low capacity: Use tiny architectures to avoid overfitting and respect production budgets. 3) Horizon awareness: Allow horizon-specific corrections without destabilizing temporal coherence.
Concretely, we use a tiny MLP as the backbone and impose:
\begin{equation}
    \|A_\theta^{\text{in}}(X)\|_\infty \le 1,\quad \|A_\theta^{\text{out}}(\cdot)\|_\infty \le 1,
\end{equation}
via tanh squashing and optional clipping, so that $\delta$ is a direct bound on the maximum per-entry change. For multiplicative edits we ensure positivity where required by applying $\exp\bigl(\delta A_\theta(\cdot)\bigr)$ as an alternative to $1+\delta A_\theta$.
For compositional adapters (input+output), as stated in Prop. \ref{prop_5}, their parameters can be optimized in parallel during the training process.

\subsection{Feature Selector}

A particularly transparent instantiation of our input adapter is to cast it as a learnable mask (selector) that selects the parts of the input that are most consequential for the frozen forecaster $F$. Concretely, for a context window $X\in\mathbb{R}^{L\times d}$, we parametrize an adapter $A_\theta$ that outputs a mask $M(X;\theta)\in[0,1]^{L\times d},$ and apply it multiplicatively,
\begin{equation}
    X' = X \odot M(X;\theta).
\end{equation}
The mask is trained end-to-end while keeping $F$ fixed. Intuitively, $M$ plays the role of a soft selector: values near $1$ keep information intact, values near $0$ suppress it.
To obtain discrete, human-readable selections without sacrificing differentiability, we employ relaxed Bernoulli parameterizations. Let $\alpha(X;\theta)\in\mathbb{R}^{L\times d}$ be adapter logits. We form a Gumbel-Sigmoid (Concrete) relaxation
\begin{equation}
M(X;\theta,\tau) = \sigma \Big(\frac{\log \alpha(X;\theta)+G}{\tau}\Big),
\end{equation}
where $G$ is i.i.d. Gumbel noise, $\sigma(\cdot)$ is the logistic function, and $\tau>0$ is a temperature annealed from a high value (smooth masks) to a low value (nearly binary). 
At inference, we may harden the mask via a threshold $M_\text{hard}=\mathbf{1}\{M>0.5\}$ or keep it soft to avoid distributional brittleness. As a simpler alternative, we use a straight-through estimator: threshold in the forward pass, backpropagate through the corresponding sigmoid in the backward pass.
Training the mask as a selector requires explicit structure in the objective. Given predictions $\tilde Y=F(X\odot M)$, we minimize
\begin{equation}
\min_{\theta} 
\underbrace{\mathcal{L}_{\text{pred}} \big(\tilde Y, Y\big)}_{\text{forecasting error}}
+ \lambda_{1}\underbrace{\|M\|_{1}}_{\text{sparsity}}
+ \lambda_{\text{ent}}\underbrace{\textstyle\sum H \big(M_{t,j}\big)}_{\text{low entropy}}
+ \lambda_{\text{tv}}\underbrace{\mathrm{TV}(M)}_{\text{temporal smoothness}}
+ \lambda_{\text{bud}}\underbrace{\big(\bar m-\kappa\big)_+}_{\text{budget}},
\end{equation}
where $\bar m = \frac{1}{Ld}\sum_{t,j} M_{t,j}$ is the average keep-rate and $\kappa\in(0,1]$ is a user-specified budget, which stabilizes selection under correlations by constraining the feasible keep set, e.g., use at most 10\% of inputs. The $\ell_1$ and entropy terms encourage sparse, nearly binary masks; the total-variation penalty $\mathrm{TV}(M)$ promotes temporal contiguity, reflecting the fact that relevant patterns often span short intervals rather than isolated instants. 
See the specific expressions of each part in Appendix \ref{app_obj_func}.

\subsection{Distribution Calibrator}

Now, we introduce how to use the proposed adapter as a calibrator when the forecaster $F$ is frozen and produces only fixed-point predictions.

\subsubsection{Quantile Calibrator}

If a distributional assumption is undesirable, the adapter can directly output horizon-wise quantiles as bounded offsets from the point forecast:
\begin{equation}
    q_{\tau,\theta}(X)=\hat Y+\varepsilon a_\theta \big(X,\hat Y,\tau\big)\odot s_\theta(X,\hat Y),
\end{equation}
where $a_\theta\in[-1,1]^{H\times m}$ and $s_\theta>0$ is a learned scale. To ensure monotonicity in $\tau$, we parameterize
\begin{equation} \label{quan_incre}
q_{\tau_{j+1},\theta} = q_{\tau_j,\theta} + \mathrm{softplus} \big(d_{j,\theta}(X,\hat Y)\big),
\quad \tau_1<\tau_2<\cdots<\tau_J,
\end{equation}
where $d_{j,\theta}$ is the adapter's raw increment for the gap between two adjacent quantile levels $\tau_j$ and $\tau_{j+1}$. Eq. \ref{quan_incre}
anchored at a central level (e.g., $\tau_{J/2}$) via the bounded offset around $\hat Y$.
Then, for the training objective, we replace the point losses with pinball loss and add reliability regularization:
\begin{equation}
\min_{\theta}
\frac{1}{N}\sum_{i=1}^N \sum_{j=1}^J
\ell_{\tau_j} \big(Y^{(i)}, q_{\tau_j,\theta}(X^{(i)})\big)
+
\lambda_{\text{cal}}\mathcal{C}_{\text{rel}}(\theta)
+
\lambda_{\text{mag}}\|a_\theta\|_2^2.
\end{equation}
where $\ell_{\tau}$ is the pinball loss; $\mathcal{C}_{\text{rel}}$ can be the same soft-coverage penalty as above, or a PIT-uniformity term computed by interpolating the predicted quantiles into a differentiable CDF and matching the PIT distribution to $\mathrm{Uniform}(0,1)$.

\subsubsection{Conformal Calibrator}

When strict distribution-free guarantees are needed, we combine a learned scale function with conformal prediction, i.e., we train $w_\theta(X,\hat Y)>0$ (small adapter) to predict residual magnitude while keeping the mean at $\hat Y$:
\begin{equation}
\min_{\theta} \frac{1}{N}\sum_{i=1}^N \Big|Y^{(i)}-\hat Y^{(i)}\Big|/w_\theta \big(X^{(i)},\hat Y^{(i)}\big)+\lambda\|w_\theta\|_2^2,
\end{equation}
subject to a mild regularizer to keep $w_\theta$ near 1 on average.
Then, we can use conformal scaling on a held-out calibration set $\mathcal{D}_{\text{cal}}$ to compute normalized residuals as
\begin{equation}
r^{(i)} = \frac{\big\|Y^{(i)}-\hat Y^{(i)}\big\|}{w_\theta \big(X^{(i)},\hat Y^{(i)}\big)} ,
\quad (X^{(i)},Y^{(i)})\in\mathcal{D}_{\text{cal}}.
\end{equation}

Then, the calibrated marginally valid prediction sets can be obtained by
\begin{equation}
    \mathcal{C}_\alpha(X) = \big\{ y:\ \|y-\hat Y\|\le \kappa_\alpha w_\theta(X,\hat Y)\big\},
\end{equation}
where $\kappa_\alpha$ is the empirical $(1-\alpha)$-quantile of $\{r^{(i)}\}$.
This yields finite-sample coverage $1-\alpha$ under exchangeability. The adapter $w_\theta$ personalizes interval width while $F$ remains untouched.

\section{Experiments}
\label{sec_exp}

We validate the $\delta$-Adapter method on a variety of widely used datasets, see Appendix \ref{app_dataset}. We test its gains when applied to pre-trained and state-of-the-art (SOTA) models (Section \ref{sec_exp_adapter}), its application as a feature selector (Section \ref{sec_exp_select}), and its effectiveness as an interval calibrator (Section \ref{sec_exp_cali}). In this paper, we set $\delta=0.1$ (0.01 for ETT datasets) and the learning rate of Adam to 1E-4, and conduct an ablation study on them at Section \ref{sec_exp_abla}.


\subsection{Effectiveness of $\delta$-Adapter}
\label{sec_exp_adapter}

\begin{table*}[!ht]
    \centering
    \caption{The improvement of $\delta$-Adapter on Pre-Trained models.}
    \label{tb_post_lm}
    \resizebox{\textwidth}{!}
    {
        \begin{tabular}{c|cc|ccc|ccc|cc|ccc|ccc}
        \toprule
        Model    & \multicolumn{8}{c|}{Sundial-S (Univariate)}                                             & \multicolumn{8}{c}{TTM-R2 (Multivariate)}                                           \\ \toprule
        Type     & \multicolumn{2}{c|}{original} & \multicolumn{3}{c|}{Ada-X} & \multicolumn{3}{c|}{Ada-Y} & \multicolumn{2}{c|}{original} & \multicolumn{3}{c|}{Ada-X} & \multicolumn{3}{c}{Ada-Y} \\  \toprule
        Dataset  & MSE           & MAE          & MSE     & MAE     & IMP   & MSE     & MAE     & IMP   & MSE           & MAE          & MSE     & MAE     & IMP   & MSE     & MAE     & IMP   \\ \toprule
        ELC      & 0.427         & 0.463        &\bf 0.334  &\bf 0.410   & 17\%  &\underline{ 0.404}   &\underline{ 0.451}   & 4\%   & 0.180         & 0.272        &\bf 0.167   &\bf 0.262   & \bf 6\%   &\underline{ 0.168}   &\bf 0.262   & 5\%   \\
        Traffic  & 0.237         & 0.314        &\bf 0.220   &\bf 0.301   & 6\%   &\underline{ 0.224}   &\underline{ 0.302}   & 5\%   & 0.517         & 0.344        &\bf 0.492   &\underline{ 0.329}   & \bf 5\%   &\bf 0.492   &\bf 0.325   & 5\%   \\
        Exchange & 0.249         &\underline{ 0.332}        &\underline{ 0.241}   &\underline{ 0.332}   & 2\%   & \bf 0.235   &\bf 0.329   & 3\%   & 0.094         & 0.213        &\bf 0.090   &\bf 0.206   & \bf 3\%   &\underline{ 0.092}   &\underline{ 0.210}   & 1\%   \\
        Weather  & 0.427         & 0.463        &\bf 0.025   &\bf 0.005   & 96\%  &\underline{ 0.039}   &\underline{ 0.059}   & 89\%  & 0.150         & 0.196        &\underline{ 0.148}   &\underline{ 0.193}   & \bf 2\%   &\bf 0.143   &\bf 0.191   & 4\%   \\
        ETTm1    & 0.121         & 0.217        &\bf 0.078   &\bf 0.190   & 24\%  &\underline{ 0.087}   &\underline{ 0.202}   & 18\%  & 0.338         &\underline{ 0.357}        &\bf 0.329   &\underline{ 0.357}   & \bf 1\%   &\underline{ 0.331}   &\bf 0.353   & 3\%   \\
        ETTm2    & 0.348         & 0.420        &\bf 0.201   &\bf 0.325   & 32\%  &\underline{ 0.254}   &\underline{ 0.371}   & 19\%  & 0.177         & 0.259        &\bf 0.174   &\underline{ 0.243}   & \bf 4\%   &\underline{ 0.175}   &\bf 0.240   & 4\%   \\ \bottomrule
        \end{tabular}
    }
\end{table*}

We first verify the performance gains of $\delta$-Adapter on pre-trained models, including Sundial-S (Univariate) \cite{liu2025sundial} and TTM-R2 (Multivariate) \cite{ekambaram2024tiny}. The experimental results in Table \ref{tb_post_lm} show that $\delta$-Adapter consistently enhances forecasting performance across all datasets and backbone models, confirming its effectiveness and generality.
Both the input adapter (Ada-X) and the output adapter (Ada-Y) have achieved significant performance gains. These results highlight that training lightweight adapters while keeping the backbone frozen is a powerful and efficient way to boost predictive accuracy.

\begin{table*}[!ht]
    \centering
    \caption{Comparison of various adapter methods and online methods (averaged across all lengths).}
    \label{tb_online_avg}
    \resizebox{\textwidth}{!}
    {
        \begin{threeparttable}
        \begin{tabular}{c|ccccc|ccccc|cccc||cc}
        \toprule
        Model         & \multicolumn{5}{c|}{DistPred}    & \multicolumn{5}{c|}{iTransformer} & \multicolumn{4}{c||}{Autoformer}   & \multicolumn{2}{c}{Others} \\
        \toprule
        Dataset & Offline & SOLID & TAFAS   & LoRA  & Ada-X+Y  & Offline & SOLID & TAFAS  & LoRA & Ada-X+Y  & Offline & SOLID & TAFAS & Ada-X+Y  & OneNet$^{\dag}$       & FSNet$^{\dag}$       \\ \toprule
        ELC     & 0.182    & 0.182 & 0.182  & 0.180 &\bf 0.175 & 0.190    & 0.190 & 0.190 & 0.186&\bf 0.180 & 0.515    & 0.502 & 0.510 &\bf 0.478 & 0.417        & 0.537       \\ \toprule
        ETTh1   & 0.461    & 0.460 & 0.476  & 0.454 &\bf 0.451 & 0.454    & 0.458 & 0.477 & 0.448 &\bf 0.449 & 0.593    & 0.589 & 0.591 &\bf 0.577 & 0.618        & 0.877       \\  \toprule
        ETTh2   & 0.390    & 0.391 & 0.402 & 0.385  &\bf 0.379 & 0.388    & 0.393 & 0.448 & 0.384 &\bf 0.377 & 0.438    & 0.435 & 0.436 &\bf 0.426 & 0.581        & 0.587       \\   \toprule
        ETTm1   & 0.412    & 0.406 & 0.411 & 0.407  &\bf 0.396 & 0.417    & 0.414 & 0.420 & 0.414 &\bf 0.403 & 0.664    & 0.661 & 0.638 &\bf 0.597 & 0.548        & 0.851       \\   \toprule
        ETTm2   & 0.285    & 0.285 & 0.288 & 0.281  &\bf 0.274 & 0.300    & 0.298 & 0.304 & 0.293 &\bf 0.290 & 0.339    & 0.339 & 0.338 &\bf 0.321 & 1.171        & 1.113       \\  \toprule
        Exchange  & 0.350    & 0.347 & 0.363 & 0.346  &\bf 0.297 & 0.383    & 0.376 & 0.392 & 0.376 &\bf 0.316 & 0.509    & 0.491 & 0.495 &\bf 0.465 & 0.647        & 0.878       \\   \toprule
        Traffic & 0.453    & 0.453 & 0.455 & 0.449  &\bf 0.440 & 0.475    & 0.475 & 0.476 & 0.468 &\bf 0.461 & 0.972    & 0.959 & 0.975 &\bf 0.942 & 0.567        & 0.701       \\   \toprule
        Weather & 0.256    & 0.255 & 0.256 & 0.251  &\bf 0.242 & 0.259    & 0.257 & 0.259 & 0.255 &\bf 0.244 & 0.325    & 0.316 & 0.325 &\bf 0.299 & 0.390        & 0.541       \\ \bottomrule
        \end{tabular}
        \begin{tablenotes}
        \item[$\dag$] OneNet and FSNet are implemented based on the public library provided in their paper with no label leakage. For more details, please refer to Table \ref{tb_online_all} in Appendix \ref{app_cmp_onlines}.
        \end{tablenotes}
    \end{threeparttable}
    }
\end{table*}

Then, we compared the proposed $\delta$-Adapter with other adapter methods and online learning methods by removing label leakage \cite{liang2024act, lau2025dsof}. It is worth noting that when removing label leakage, some methods have a certain degree of performance degradation.
This may be because the design of these methods relies excessively on future true values.
Table \ref{tb_online_avg} shows that the $\delta$-adapter achieves the lowest error on every dataset across all three backbones. The gains are sizeable on challenging sets, while remaining consistent on the ETT variants. Moreover, when contrasted with OneNet and FSNet, $\delta$-Adapter paired with standard backbones yields substantially lower errors on all datasets, underscoring its plug-and-play effectiveness and robustness.

\begin{table*}[h]
  \centering
  \caption{Gains of $\delta$-Adapter on SOTA models (averaged across all lengths. See Table \ref{tb_post_mts} for details).}
  \resizebox{\textwidth}{!}
  {
    \begin{tabular}{c|ccc|ccc|ccc|ccc|ccc}
    \toprule
    Model   & \multicolumn{3}{c|}{DistPred} & \multicolumn{3}{c|}{iTransformer} & \multicolumn{3}{c|}{FourierGNN} & \multicolumn{3}{c|}{FreTS} & \multicolumn{3}{c}{Autoformer} \\ \toprule
    Dataset & Original   & Ada-X   & Ada-Y  & Original    & Ada-X    & Ada-Y   & Original   & Ada-X   & Ada-Y   & Original  & Ada-X & Ada-Y & Original   & Ada-X   & Ada-Y   \\ \toprule
    ELC     & 0.182      & \underline{0.178}   &\bf 0.169  & 0.190       &\underline{ 0.187}    &\bf 0.181   & 0.267      &\underline{ 0.255}   &\bf 0.241   & 0.209     &\underline{ 0.203} &\bf 0.194 & 0.515      &\underline{ 0.488}   &\bf 0.450   \\
    Exchange  & 0.350      &\bf 0.302   & \underline{0.319}  & 0.383       &\bf 0.348    &\underline{ 0.349}   &\underline{ 0.380}      & 0.393   &\bf 0.379   &\underline{ 0.416}     &\bf 0.412 & 0.422 & 0.509      &\underline{ 0.481}   &\bf 0.462   \\
    Traffic & 0.453      &\underline{0.448}   &\bf 0.442  & 0.475       &\underline{ 0.470}    &\bf 0.461   & 0.777      &\underline{ 0.749}   &\bf 0.740   & 0.596     &\underline{ 0.590} &\bf 0.572 & 0.972      &\underline{ 0.959}   &\bf 0.918   \\
    Weather & 0.256      &\underline{ 0.251}   &\bf 0.245  & 0.259       &\underline{ 0.249}    &\bf 0.245   & 0.255      &\underline{ 0.251}   &\bf 0.244   & 0.255     &\underline{ 0.249} &\bf 0.243 & 0.325      &\underline{ 0.306}   &\bf 0.299   \\ 
    ETTh1   & 0.461      &\bf 0.457   &\underline{ 0.458}  &\underline{ 0.454}       &\bf 0.453    & 0.456   & 0.561      &\underline{ 0.546}   &\bf 0.542   & 0.482     &\underline{ 0.474} &\bf 0.471 & 0.593      &\underline{ 0.583}   &\bf 0.577   \\
    ETTh2   & 0.390      &\bf 0.386   &\underline{ 0.387}  &\underline{ 0.388}       &\bf 0.385    & 0.390   & 0.545      &\bf 0.499   &\underline{ 0.506}   & 0.537     &\bf 0.492 &\underline{ 0.498} & 0.438      &\bf 0.420   &\underline{ 0.423}   \\
    ETTm1   & 0.412      &\bf 0.399   &\underline{ 0.402}  & 0.417       &\underline{ 0.407}    &\bf 0.406   & 0.456     &\bf 0.447 &\bf 0.447   & 0.405     &\bf 0.401 &\bf 0.401 & 0.664   &\bf 0.604   &\underline{ 0.637}   \\
    ETTm2   & 0.285      &\bf 0.279   &\underline{ 0.282}  & 0.300       &\bf 0.292    &\underline{ 0.293}   & 0.445      &\bf 0.386   &\underline{ 0.439}   & 0.335     &\bf 0.285 &\underline{ 0.323} & 0.339      &\bf 0.316   &\underline{ 0.320}   \\ \bottomrule
    \end{tabular}
    }
  \label{tb_post_mts_avg}
\end{table*}

Next, we verify whether the $\delta$-Adapter provides gains to the SOTA forecaster. Table \ref{tb_post_mts_avg} shows that $\delta$-Adapter provides consistent and significant improvements across multiple SOTA models. For nearly all datasets, Ada-X and Ada-Y lead to lower prediction errors compared to the original models, demonstrating that the proposed adapters generalize well to diverse forecasting architectures. Notably, Ada-X again delivers the largest gains, particularly on challenging datasets such as Exchange, Traffic, and ETT series, confirming that refining the input signals before model inference is the most impactful strategy. Also, $\delta$-Adapter yields clear benefits, highlighting its plug-and-play nature and ability to enhance high-performing models. These results further validate that $\delta$-Adapter is a broadly applicable, efficient, and effective enhancement method for modern forecaster.

\begin{figure}[h]
  \centering
  \centerline{\includegraphics[width=\textwidth]{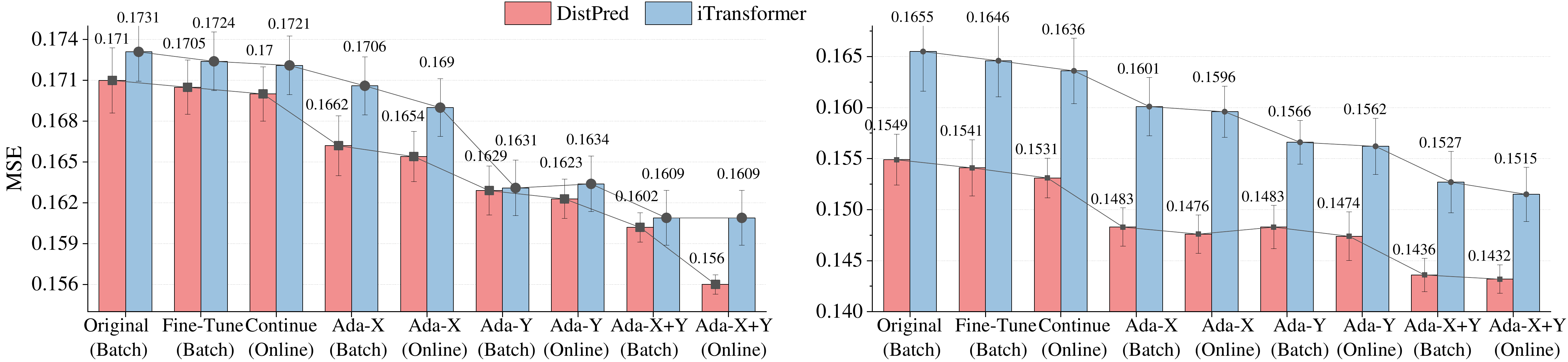}}
  \caption{Performances of the forecaster $F$ and $\delta$-Adapter under batch or online training.}
  \label{fig_xy_comp}
\end{figure}

Finally, we test whether $\delta$-Adapter is effective under different compositions and training methods. 
Implementation and training details of Ada-X+Y are in Appendix \ref{app_detail_ada_xy}).
Figure \ref{fig_xy_comp} shows that $\delta$-Adapter consistently reduces error under batch and online training.
Each single adapter improves over the frozen forecaster and also outperforms conventional fine-tuning or continue-training. And training the adapters online yields further gains over batch. Importantly, {\color{red}Ada-X+Y} delivers the lowest MSE in all settings, indicating robust and statistically reliable improvements.

\subsection{Effectiveness of The Feature Selector}
\label{sec_exp_select}

\begin{figure}[h]
  \centering
  \centerline{\includegraphics[width=\textwidth]{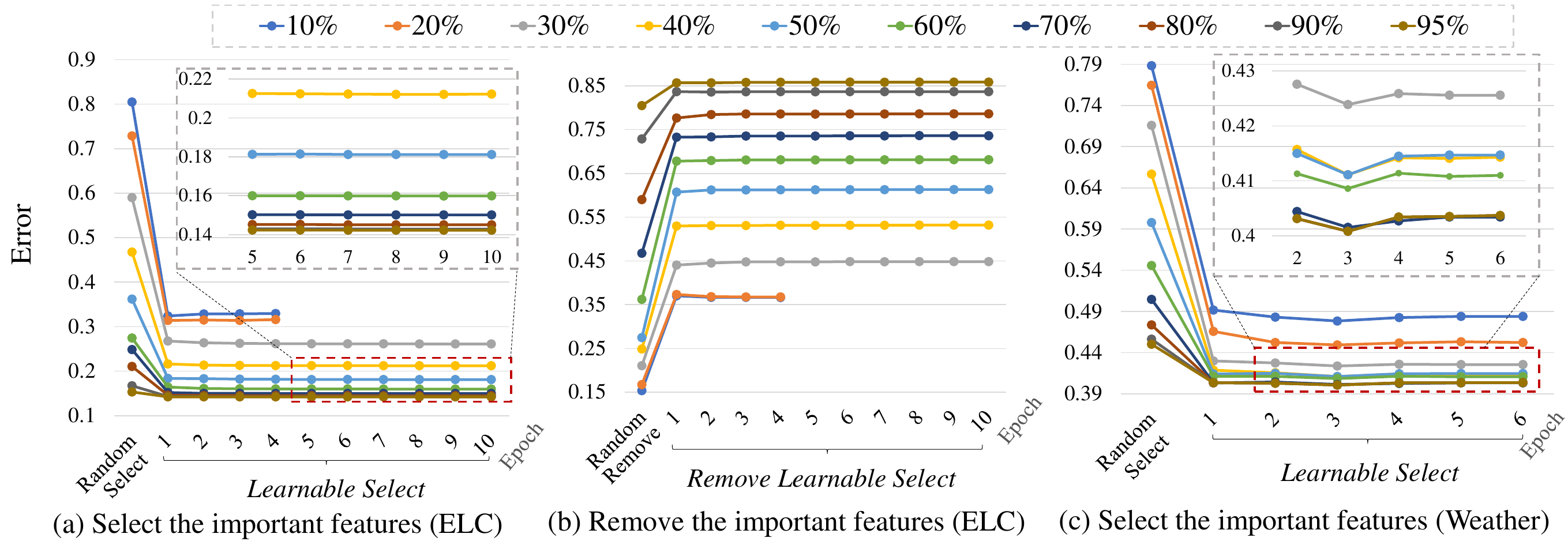}}
  \caption{Changes of forecaster's performance after selecting or removing valid features.}
  \label{fig_select_feas}
\end{figure}

\begin{wraptable}{r}{0.67\textwidth}
    \vspace{-1.3em}
    \centering
    \caption{Best performance of the mask adapter and its mask ratio.}
    \vspace{-0.8em}
    \label{tb_mask_rate}
     \resizebox{0.67\textwidth}{!}
     {
        \begin{tabular}{ccccccccc}
        \toprule
        Dataset   & ELC   & ETTh1 & ETTh2 & ETTm1 & ETTm2 & Traffic & Weather & Exchange \\ \toprule
        Original  & 0.163 & 0.390 & 0.296 & 0.345 & 0.182 & 0.444   & 0.173   & 0.099  \\  
        Masked    & 0.159 & 0.382 & 0,291 & 0.334 & 0.176 & 0.436   & 0.171   & 0.093  \\ 
        Mask Ratio & 97\%  & 96\%  & 95\%  & 97\%  & 96\%  & 98\%    & 96\%    & 92\%  \\ \bottomrule
        \end{tabular}
     }
     \vspace{-0.8em}
\end{wraptable}

To verify the effectiveness of the mask adapter as a feature selector, we visualized its training process, as shown in Figure \ref{fig_select_feas}.
It demonstrates that a learnable mask adapter reliably identifies the most informative input features under varying sparsity budgets. In subfigure (a), selected features yields markedly lower errors than random selection across all retention rates (10–95\%) and converges within a few epochs. Conversely, when the learned features are removed (b), the forecaster's error rises substantially, often worse than removing an equal number of randomly chosen features. This shows that these features are uniquely critical to performance rather than incidental.
Table \ref{tb_mask_rate} shows the mask ratio of the mask adapter when the best performance is achieved (no budget added), and Figure \ref{fig_select_vis} visualizes important features in different proportions (most important features remain unchanged). 
These confirming that the learned selections consistently outperform random picks, and removing them degrades accuracy the most.

\begin{figure}[h]
  \centering
  \centerline{\includegraphics[width=\textwidth]{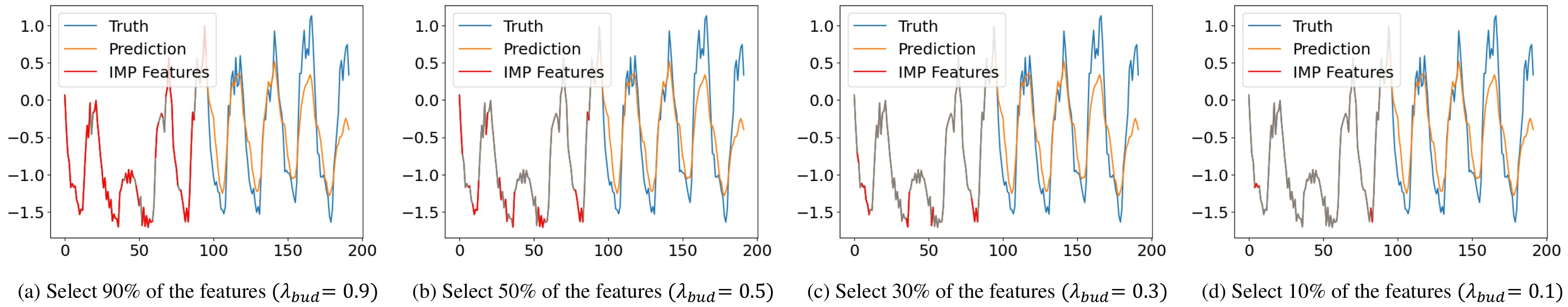}}
  \caption{Visualization of different important features learned by the mask adapter.}
  \label{fig_select_vis}
\end{figure}

\subsection{Performance of Calibrator}
\label{sec_exp_cali}

\begin{figure}[h]
  \centering
  \centerline{\includegraphics[width=\textwidth]{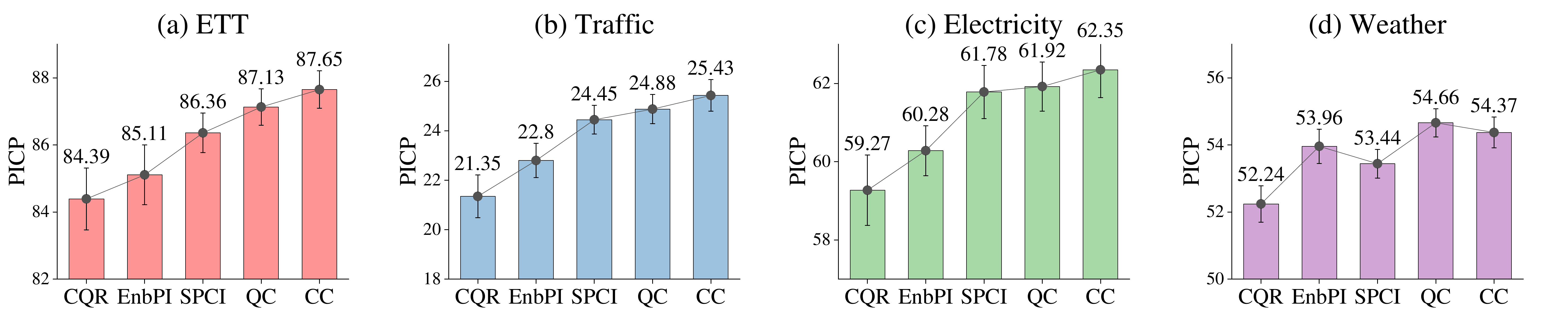}}
  \caption{Comparisons among the Quantile (QC), Conformal (CC) calibrators and others.}
  \label{fig_cp_cali}
\end{figure}

Now, we verify the effect of $\delta$-Adapter as the Quantile Calibrator (QC) and Conformal Calibrator (CC). 
As shown in Figure \ref{fig_cp_cali}, our calibrators consistently deliver the highest PICP, indicating better coverage reliability than strong baselines (CQR \cite{romano2019conformalized}, EnbPI \cite{xu2021conformal}, SPCI \cite{xu2023sequential}).
Further, in Figure \ref{fig_cali}, we illustrate that both calibrators produce well-calibrated intervals that expand near peaks and usually enclose the ground truth. QC tends to yield slightly wider, more conservative bands, while CC delivers comparably high coverage with tighter intervals. 

\begin{figure}[h]
  \centering
  \centerline{\includegraphics[width=\textwidth]{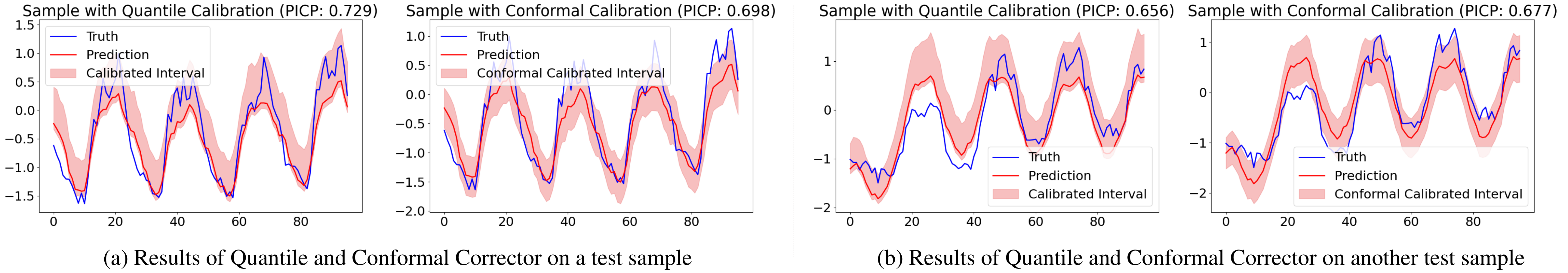}}
  \caption{Visualization of the Quantile and Conformal calibrator predictions.}
  \label{fig_cali}
\end{figure}

\clearpage
\subsection{Ablation Studies}
\label{sec_exp_abla}

\begin{wrapfigure}{r}{0.5\textwidth}
  \vspace{-2em}
  \centering
  \includegraphics[width=0.45\textwidth]{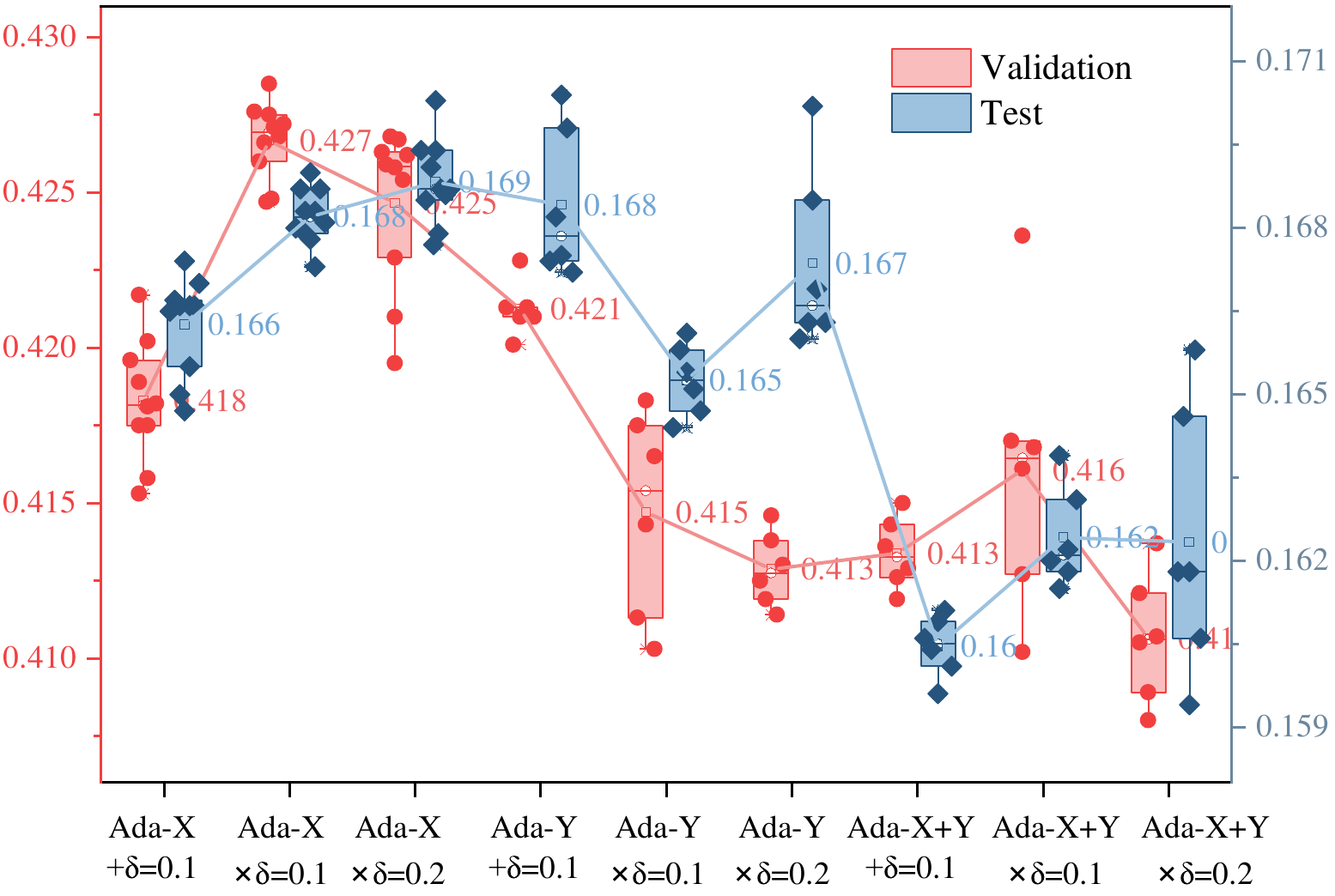}
  \vspace{-0.6em} 
  \caption{Performance of $\delta$-Adapter's variants.}
  \label{fig_variants} 
  \vspace{-0.8em} 
\end{wrapfigure}

The impact of $\delta$ is important. Figure \ref{fig_variants} indicates that all adapter variants reduce error versus the frozen model, but the combined adapter (Ada-X+Y) delivers the lowest median errors and the tightest variability. Across placements, a moderate adjustment size is the most reliable, e.g., pushing to $\delta=0.2$ yields smaller or inconsistent gains, suggesting overly aggressive corrections. 
It is confirmed that composing input and output adapters with a modest multiplicative trust-region produces the most accurate and stable forecasts.

Then, we used PatchTST \cite{Yu2023PatchTST} and TimeMixer \cite{wang2024timemixer} as backbones to compare the performance of additive and multiplicative composite $\delta$-Adapter. As shown in Table \ref{tb_patch_timemixer}, after adding the $\delta$-Adapter to PatchTST and TimeMixer, their performance has been significantly improved. the additive and multiplicative adapters reduce the MSE of PatchTST by 5.6\% and 5.1\% respectively across various datasets. However, for TimeMixer, the MSE reductions from the additive and multiplicative adapters are 1.6\% and 1.8\% respectively. This indicates that both have their respective advantages, and the increase of the additive adapter is relatively more significant.

\begin{table*}[!ht]
    \centering
    \caption{Comparison of additive and multiplicative composite $\delta$-Adapter.}
    \label{tb_patch_timemixer}
    \resizebox{0.85\textwidth}{!}
    {
        \begin{tabular}{ccccccccccccc}
        \toprule
        & \multicolumn{2}{c}{PatchTST} & \multicolumn{2}{c}{+ Ada-X+Y} & \multicolumn{2}{c}{+ Ada-X$\times$Y} & \multicolumn{2}{c}{TimeMixer} & \multicolumn{2}{c}{+ Ada-X+Y} & \multicolumn{2}{c}{+ Ada-X$\times$Y} \\ \toprule
        & MSE           & MAE          & MSE          & MAE          & MSE          & MAE          & MSE           & MAE           & MSE          & MAE          & MSE          & MAE          \\ \toprule
        ELC     & 0.167         & 0.252        &\bf 0.159        &\bf 0.245        &\bf 0.159        &\underline{0.246}        & 0.145         & 0.243         &\bf 0.143        &\bf 0.241        &\bf 0.143        &\bf 0.241        \\
        Weather & 0.178         & 0.219        &\bf 0.161        &\bf 0.220         & \underline{0.165}        & \underline{0.224}        & 0.168         & 0.216         & \underline{0.166}        &\bf 0.214        &\bf 0.164        &\bf 0.214        \\
        Traffic & 0.463         & 0.297        & \underline{0.451}        & \underline{0.292}        &\bf 0.448        &\bf 0.290         & 0.475         & 0.317         &\bf 0.465        &\bf 0.307        & \underline{0.467}        & \underline{0.310}        \\
        \bottomrule
        \end{tabular}
    }
\end{table*}



Forecasters are susceptible to hyperparameters. Thus, we investigated the impact of two key factors: $\delta$ and its learning rate. As shown in Figure \ref{fig_delta_lr}, despite the large variation ranges of $\delta$ and the learning rate, the forecaster (iTransformer) can still maintain relatively stable prediction performance. However, other models, such as those that attempt to fine-tune pre-trained large models using Lora \cite{hu2022lora}, not only exhibit large performance variance but also lead to degradation in performance, which remains a problem worthy of further exploration. 
These experiments demonstrate that the proposed $\delta$-Adapter has a stable training process and brings performance gains.

\begin{figure*}[h]
  \includegraphics[width=\textwidth]{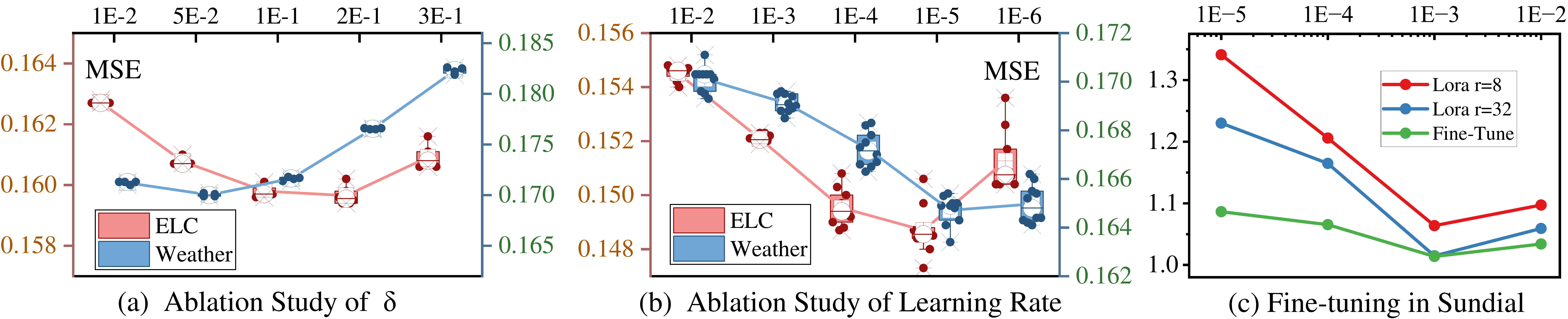}
  \caption{Variances of Ada-X with different $\delta$(a) and learning rates (b) and fine-tuned Sundial (c).}
  \label{fig_delta_lr} 
\end{figure*}

Finally, we tested the efficiency of $\delta$-Adapter.
Table \ref{tb_times} shows that $\delta$-Adapter is the most time-efficient adaptive method overall, it is consistently faster than other methods across all horizons. This is because Ada-X+Y itself is lightweight and only uses the most recent single sample to update the model. Compared to other adapters that use or select a large number of recent samples for updates, it is obviously faster. The $\delta$-Adapter is designed to be extremely lightweight. Compared to the backbone model (specifically, for Sundial (128M) and TabPFN (48M), the adapter introduces less than 2\%-6\% additional parameters, validating the lightweight claim.

\begin{table*}[!ht]
    \centering
    \caption{Time (S) and memory (MB) of adapters (backbone is TabPFN) and online methods.}
    \label{tb_times}
    \resizebox{\textwidth}{!}
    {
        \begin{tabular}{cc|ccc|ccc|ccc|ccc|ccc}
        \toprule
        \multicolumn{2}{c|}{TabPFN ~48M} & \multicolumn{3}{c|}{Ada-X+Y ~3M} & \multicolumn{3}{c|}{SOLID ~0.5M} & \multicolumn{3}{c|}{TAFAS ~6M} & \multicolumn{3}{c|}{OneNet ~3M} & \multicolumn{3}{c}{FSNet ~2M} \\ \toprule
        Time        & Memory        & Train   & Test   & Memory   & Train   & Test  & Memory  & Train   & Test  & Memory  & Train   & Time   & Memory  & Train   & Time  & Memory  \\ \toprule
        281         & 1840          & 392     & 395    & 1983     & 511     & 667   & 2401    & 603     & 861   & 3468    & 693     & 471    & 1512    & 621     & 485   & 1504    \\
        307         & 1848          & 386     & 379    & 2132     & 481     & 624   & 2423    & 589     & 895   & 3790    & 681     & 445    & 1537    & 618     & 472   & 1531    \\
        326         & 1852          & 385     & 415    & 2622     & 484     & 593   & 2446    & 583     & 1152  & 4186    & 631     & 452    & 1559    & 599     & 466   & 1517    \\
        351         & 1856          & 369     & 431    & 3102     & 505     & 398   & 2501    & 916     & 1803  & 6809    & 530     & 465    & 1567    & 554     & 458   & 1526    \\
        \bottomrule
        \end{tabular}
    }
\end{table*}

\section{Conclusion}

We present $\delta$-Adapter, a lightweight and post-hoc framework that improves frozen forecasters via bounded input nudges and output residual corrections. we provide  theory guaranteeing local descent and stable composition.
To enhance interpretability and robustness, we introduce a feature-selector adapter that learns a sparse, horizon-aware mask under budget priors, exposing the most consequential inputs while constraining edits. Beyond point forecasts, we deliver calibrated uncertainty via two distributional correctors: a Quantile Calibrator that learns quantile offsets trained with pinball loss, and a Conformal Corrector that estimates heteroscedastic scales for normalized-residual conformal prediction, yielding finite-sample coverage with personalized intervals.
Across diverse backbones and datasets, $\delta$-Adapter yields consistent accuracy and calibration gains.


\section*{Ethics statement}

Biases in benchmark creation: The authors are aware of the potential for bias in the creation of our benchmark entries. The selection and definition of dark patterns, as well as the design of benchmark prompts, may inadvertently refect the authors' perspectives and biases. This includes assumptions about user interactions and model behaviors that may not be universally accepted or relevant.

Misuse potential: While our intention with this benchmark is to identify and reduce the presence of dark design patterns in LLMs, we acknowledge the potential for misuse. There is a risk that malicious actors could use this benchmark to fine-tune models in ways that intentionally enhance these dark patterns, thereby exacerbating their negative impact.

\section*{Reproducibility Statement}

The code used in this paper can be found here. And we use notebooks to write some simple examples so that readers can quickly implement the results of the paper. The steps to reproduce the paper are:

\begin{itemize}
    \item 1. Download the code.
    \item 2. Install the necessary environment.
    \item 3. Run ``bash run.sh''.
    \item 4. Or, run the provided notebook.
\end{itemize}

The code is given in this \href{https://drive.google.com/drive/folders/1Deopxbjerl92Rqhb_0gx6SO1ViQNq-k1?usp=sharing}{\textbf{Repository}}.

\section*{LLM Usage Statement}

The authors affirm that no LLM was used in our work.



\bibliography{iclr2025_conference}
\bibliographystyle{iclr2025_conference}

\clearpage
\appendix

\section{Related Work}
\label{sec_rework}

\subsection{Classical Models for TS Forecasting}

TS forecasting is a classic research field where numerous methods have been invented to utilize historical series to predict future missing values. 
Early classical methods \cite{piccolo1990distance,gardner1985exponential} are widely applied because of their well-defined theoretical guarantee and interpretability. For example, ARIMA \cite{piccolo1990distance} initially transforms a non-stationary TS into a stationary one via differencing, and subsequently approximates it using a linear model with several parameters. 
Exponential smoothing \cite{gardner1985exponential} predicts outcomes at future horizons by computing a weighted average across historical data. In addition, some regression-based methods, e.g., random forest regression (RFR) \cite{liaw2002classification} and support vector regression (SVR) \cite{castro2009online}, etc., are also applied to TS  forecasting. 
These methods are straightforward and have fewer parameters to tune, making them a reliable workhorse for TS forecasting. 
However, their shortcoming is insufficient data fitting ability, especially for high-dimensional series, resulting in limited performance.

\subsection{Deep Models for TS Forecasting}

The advancement of deep learning has greatly boosted the progress of TS forecasting. 
Specifically, convolutional neural networks (CNNs) \cite{lecun1998gradient} and recurrent neural networks (RNNs) \cite{connor1994recurrent} have been adopted by many works to model nonlinear dependencies of TS, 
e.g., LSTNet \cite{lai2018modeling} improve CNNs by adding recursive skip connections to capture long- and short-term temporal patterns;
DeepAR \cite{salinas2020deepar} predicts the probability distribution by combining autoregressive methods and RNNs.
Several works have improved the series aggregation forms of Attention mechanism, such as operations of exponential intervals adopted in LogTrans \cite{li2019enhancing}, ProbSparse activations in Informer \cite{Zhou2021Informer}, frequency sampling in FEDformer \cite{zhou2022fedformer} and iterative refinement in Scaleformer \cite{shabani2022scaleformer}. 
Besides, GNNs and Temporal convolutional networks (TCNs) \cite{lea2016temporal} have been utilized in some methods \cite{wu2019graph,li2023dynamic,liu2022scinet,wu2022timesnet} for TS forecasting on graph data.
The aforementioned methods solely concentrate on the forms of aggregating input series, overlooking the challenges posed by the concept drift problem.

\subsection{Transformer-like Models}

Since TS exhibit a variety of patterns, it is meaningful and beneficial to decompose them into several components, each representing an underlying category of patterns that evolving over time \cite{1976_timeseries}.
Several methods, e.g., STL \cite{cleveland1990stl}, Prophet \cite{taylor2018forecasting} and N-BEATS \cite{oreshkin2019n}, commonly utilize decomposition as a preprocessing phase on historical series.
There are also some methods, e.g., Autoformer \cite{wu2021autoformer}, FEDformer \cite{zhou2022fedformer}, Non-stationary Transformers \cite{liu2022non} and DistPred \cite{liang2025distpred}, that harness decomposition into the Attention module. 
The aforementioned methods attempt to apply decomposition to input series to enhance predictability, reduce computational complexity, or ameliorate the adverse effects of non-stationarity.
Nevertheless, these prevalent methods are susceptible to significant concept drift when applied to non-stationary TS.

Furthermore, there are four themes that use deep learning to predict time series: (1) smarter transformers \cite{NIPS2017_Transformer}, such as PatchTST \cite{nie2022time}, iTransformer \cite{liu2023itransformer}  BasisFormer \cite{ni2023basisformer}, and TimeXer \cite{wang2024timexer} which restructure attention or add learnable bases to extend context length, cut computation and boost accuracy; (2) competitive non-transformer backbones, including N-HiTS (hierarchical MLP) \cite{challu2023nhits}, DLinear \cite{zeng2023transformers}, PGN \cite{jia2024pgn} and state-space models like TSMamba \cite{ma2024mamba}, TimeMachine \cite{ahamed2024timemachine} and FLDMamba \cite{zhang2025fldmamba}, which deliver linear-time inference and rival or surpass transformers on long horizons; (3) foundation-model initiatives, TimeGPT \cite{garza2023timegpt}, OneFitAll \cite{zhou2023one}, TimeLLM \cite{jin2024timellm}, UniTime \cite{liu2024unitime} and DAM \cite{darlowdam2024DAM} that pre-train on massive heterogeneous corpora and achieve impressive zero-shot or few-shot performance across domains; and (4) training and interpretability advances, such as frequency-adaptive normalization (FAN) \cite{ye2024frequency}, e.g., FreTS \cite{yi2024frequency}, FilterNet \cite{yi2024filternet}, and decomposition-aware architectures, e.g.,  which tackle non-stationarity, quantify uncertainty and make forecasts more transparent. 




\subsection{Online Learning}

Online learning strategies embed concept drift adaptation within the forecasting models themselves. One example is FSNet c, which leverages complementary learning systems theory to pair a slow-learning base forecaster with fast-adapting components. Another line of work is OneNet \cite{wen2023onenet}, an online ensembling approach that dynamically combines two neural models: one specializes in capturing temporal dependencies within each series, and the other focuses on cross-series (covariate) relationships. 
Each of these deep learning techniques illustrates how integrating drift-awareness (through dual-model architectures, ensembling, or proactive adjustment) can improve TS forecasting performance in online.

\subsection{Post-Processing Methods in Time Series}

Testing-Time adaption (TTA) is very important for Time Series Forecasting. 
The adapter-based methods include SOLID \cite{Chen2024SOLID}, TAFAS \cite{Kim2025TAFAS} and its follow-ups PETSA \cite{medeiros2025PETSA} and DynaTTA \cite{Grover2025DynaTTA}, ELF \cite{Lee2025ELF}, etc., and online approaches, e.g., FSNet \cite{pham2023learning} and OneNet \cite{wen2023onenet}, aim to mitigate test-time concept drift. Specifically, SOLID retrains selected predictor layers using the most recent similar samples; TAFAS updates linear adapters by online detection of temporal cycles; PETSA and DynaTTA extends TAFAS with additional losses and dynamic gating to further enhance adaptability.
These methods are either based on linear adapters, parallel fusion, or overall fine-tuning; and, they do not consider the impact of label leakage \cite{liang2024act, lau2025dsof}. On the contrary, $\delta$-Adapter can perform non-linear adaptation on both input and output, with good theoretical guarantees. And it only relies on the most recent sample for fast updates. In addition, it can be used as a feature selector or a corrector.

\subsection{Post-Processing Methods in NLP}

Our work is conceptually related to the general parameter-efficient adaptation methods that have been developed primarily in NLP. Adapter modules for BERT and other Transformers add small task-specific bottleneck layers between pre-trained weights, keeping the backbone frozen while achieving near–fine-tuning performance on many downstream tasks \cite{houlsby2019param}. This idea has been extended to multilingual and multi-task settings (e.g., MAD-X), where language and task adapters are stacked to enable cross-lingual transfer \cite{pfeiffer2020mad}. A complementary direction is low-rank adaptation (LoRA), which inserts trainable low-rank matrices into attention and feed-forward projections to adapt large language models with only a small number of additional parameters \cite{hu2022lora}. Another family of methods performs input-side adaptation via prompt and prefix tuning: instead of changing internal weights, they learn continuous prompts or prefixes at the embedding level that condition a frozen language model for each task \cite{Xiang2021Prefix}.

Compared with the above methods, $\delta$-Adapter adopts the same high-level principle of learning a small $\delta$-module around a frozen backbone, but it is tailored to TSF and operates strictly at the input/output interface of a possibly black-box forecaster. Specifically, we introduce horizon-aware input adapters, feature-masking modules, and output-side uncertainty adapters, and we analyze their behavior through Lipschitz-style stability and descent guarantees. To our knowledge, such an I/O level, theoretically characterized adapter framework for multi-horizon forecasting is not present in the existing NLP adapter or prompt-tuning literature, which primarily modifies internal layers or token embeddings of language models.

\subsection{Conformal Prediction}

Conformal prediction presents an alternative framework for distribution prediction, diverging from traditional parametric approaches. In their study, the authors in \citep{vovk2017nonparametric, vovk2018cross} introduced a random prediction system and proposed a nonparametric prediction method grounded in conformal assumptions. By integrating conformal prediction with quantile regression in \citep{romano2019conformalized, xu2021conformal, xu2023sequential}, they developed a method for constructing prediction intervals for the response variable.
However, the practical application of conformal prediction is not without limitations. Its effectiveness is often constrained by the assumption of exchangeability of residuals, which may not hold in all contexts, particularly in the presence of temporal dependencies. This limitation can lead to less reliable prediction intervals when applied to non-independent and identically distributed (non-i.i.d.) data, thereby challenging its robustness in real-world scenarios where data often exhibit complex dependencies.

\subsection{Terminology Explanation}

\textbf{Conditions drift}, which refers to gradual changes in the data-generating process (e.g., seasonal regime shifts, covariate shifts in demand patterns) that occur after the model has been deployed, making full retraining costly;
\textbf{Low-complexity residual structure} means that residual errors often exhibit simple patterns (e.g., horizon-wise bias, scale miscalibration, calendar offsets) that can be captured by a small function class (tiny MLPs/low-rank heads) rather than requiring a new high-capacity backbone, but the base model fails to absorb them.

\section{Theoretical proof}

\subsection{Proof of Proposition \ref{prop_1}}

Define risks (squared error):
\begin{equation}
    \mathcal R_{\text{out}}(\delta)=\tfrac12\,\mathbb E\big[\|Y-(F(X)+\delta g(X))\|^2\big]
=\tfrac12\,\mathbb E\big[\|R(X)-\delta g(X)\|^2\big],
\end{equation}
and 
\begin{equation}
\mathcal R_{\text{in}}(\delta)=\tfrac12\,\mathbb E\big[\|Y-F(X+\delta u(X))\|^2\big].
\end{equation}

\begin{proof}
    
    Let $A:=\mathbb E\|g(X)\|^2$ and $B:=\mathbb E\langle R(X),g(X)\rangle$. Then
    \begin{equation}
    \mathcal R_{\text{out}}(\delta)
    =\tfrac12\,\mathbb E\|R\|^2-\delta B+\tfrac12\,\delta^2 A.
    \end{equation}
    Hence $\mathcal R_{\text{out}}$ is a strictly convex quadratic in $\delta$ whenever $A>0$, with unique minimizer $\delta^\star=B/A$ and minimal value
    \begin{equation}
    \mathcal R_{\text{out}}(\delta^\star)
    =\tfrac12\,\mathbb E\|R\|^2-\tfrac12\,\frac{B^2}{A}.
    \end{equation}
    In particular, if $B>0$ and $A>0$ then for all $0<\delta<2B/A$,
    $\mathcal R_{\text{out}}(\delta)<\mathcal R_{\text{out}}(0)=\tfrac12\,\mathbb E\|R\|^2$.
    Then, expand the square:
    \begin{equation}
    \|R-\delta g\|^2=\|R\|^2-2\delta\langle R,g\rangle+\delta^2\|g\|^2.
    \end{equation}
    Taking expectations and multiplying by $\tfrac12$ yields the displayed quadratic form. If $A>0$, the derivative $d\mathcal R_{\text{out}}/d\delta=-B+\delta A$ vanishes uniquely at $\delta^\star=B/A$; strict convexity gives the minimal value above. If $B>0$, then near $\delta=0$ the derivative is negative, so every $\delta\in(0,2B/A)$ strictly improves the risk over $\delta=0$. If $A=0$ then $g=0$ a.s. and risk is constant; if $B\le 0$ there is no positive $\delta$ improving over $\delta=0$.
    
    \begin{remark}
         (i) This is exactly the first (shrunk) step of residual boosting. (ii) The achievable drop at the optimal $\delta^\star$ is $\tfrac12(B^2/A)$, which is positive iff $B\neq 0$ and $A>0$.
    \end{remark}

\end{proof}

\subsection{Proof of Proposition \ref{prop_3}}
\label{app_prop_3}

Let $F:\mathbb R^d\to\mathbb R^H$ be differentiable, $u:\mathcal X\to\mathbb R^d$ a measurable nudging field, and define for $\delta\ge 0$
\begin{equation}
\hat Y_{\mathrm{in}}(X;\delta)  =  F \big(X+\delta\,u(X)\big), \qquad
\mathcal R_{\mathrm{in}}(\delta) = \tfrac12\,\mathbb E \left[\big\|y-\hat y_{\mathrm{in}}(X;\delta)\big\|_2^{\,2}\right].
\end{equation}
Write $r(X)=y-F(X)$, $J_F(X)$ for the Jacobian of $F$ at $X$, and $A:=\mathbb E \big[\langle r(X),\,J_F(X)\,u(X)\rangle\big]$.

If $A>0$, then there exists $\varepsilon>0$ such that $\mathcal R_{\mathrm{in}}(\delta)<\mathcal R_{\mathrm{in}}(0)$ for all $\delta\in(0,\varepsilon]$.
If, in addition, $F$ is affine in a neighborhood of the support of $X$ ($J_F$ is constant and the Hessian is zero), then
\begin{equation}
\mathcal R_{\mathrm{in}}(\delta)
=\tfrac12\,\mathbb E \left[\left\|r(X)-\delta\,J_F\,u(X)\right\|_2^{\,2}\right],
\end{equation}
is a quadratic function of $\delta$ whose unique minimizer is
\begin{equation}
\delta^\star  =  \frac{\mathbb E \left[\langle r(X),\,J_F\,u(X)\rangle\right]}
{\mathbb E \left[\|J_F\,u(X)\|_2^{\,2}\right]}\,.
\end{equation}

Based on the conditions, we know that: $F$ is $C^1$ (continuously differentiable) on an open set containing $\{x+\delta u(X): \delta\in[0,\delta_0]\}$ for some $\delta_0>0$. And $\|J_F(X+\delta u(X))\,u(X)\|$ is integrable uniformly for $\delta\in[0,\delta_0]$, and $\|y-F(X+\delta u(X))\|$ is integrable.  

First, we have the following lemma, 
\begin{lemma} [Improvement via Jacobian-aligned nudging] \label{lemma_add}
If $\mathbb E[\langle r, J_F u\rangle]>0$, then sufficiently small $\delta>0$ reduces risk. As before, the optimal small-step size is $\delta^\star=\frac{\mathbb E[\langle r,J_F u\rangle]}{\mathbb E[\|J_F u\|^2]}$.
\end{lemma}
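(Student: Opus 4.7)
The plan is to treat $\mathcal R_{\mathrm{in}}$ as a smooth scalar function of $\delta$ and show that its right-derivative at $\delta = 0$ equals $-A$; since $A > 0$ by hypothesis, the risk must strictly decrease on a right-neighborhood of $0$. The stated optimal $\delta^\star$ will then drop out of an exact quadratic expansion in the affine case, exactly paralleling Proposition \ref{prop_1}.

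First I would form the per-sample loss $\ell(X,\delta) := \tfrac12 \|y - F(X+\delta u(X))\|_2^{\,2}$ and differentiate in $\delta$ by the chain rule, obtaining
\begin{equation}
\partial_\delta \ell(X,\delta) = -\langle y - F(X+\delta u(X)),\; J_F(X+\delta u(X))\, u(X)\rangle,
\end{equation}
which at $\delta = 0$ collapses to $-\langle r(X),\, J_F(X)\,u(X)\rangle$. The crucial step is then to swap differentiation and expectation so that $\mathcal R_{\mathrm{in}}'(0) = -A$. I would justify this via dominated convergence, using the uniform integrability of $\|J_F(X+\delta u(X))\,u(X)\|$ and $\|y - F(X+\delta u(X))\|$ over $\delta \in [0,\delta_0]$ assumed in the preamble, together with a mean-value bound that produces an integrable envelope for the difference quotient. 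This exchange is the main obstacle; once it is in place, the rest is essentially cosmetic.

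With $\mathcal R_{\mathrm{in}}'(0) = -A < 0$ in hand, continuity of $\delta \mapsto \mathcal R_{\mathrm{in}}(\delta)$ together with a strictly negative right-derivative at $0$ immediately yields some $\varepsilon > 0$ with $\mathcal R_{\mathrm{in}}(\delta) < \mathcal R_{\mathrm{in}}(0)$ for every $\delta \in (0,\varepsilon]$, which is the first conclusion. For the affine case, I would note that $F(X+\delta u(X)) = F(X) + \delta\, J_F\, u(X)$ holds \emph{exactly} on the relevant neighborhood, so
\begin{equation}
\mathcal R_{\mathrm{in}}(\delta) = \tfrac12\, \mathbb E\|r(X)\|_2^{\,2} - \delta\, A + \tfrac12\, \delta^2\, \mathbb E\|J_F\, u(X)\|_2^{\,2},
\end{equation}
a strictly convex quadratic in $\delta$ whenever $\mathbb E\|J_F u\|^2 > 0$, whose first-order condition produces the claimed minimizer $\delta^\star = A / \mathbb E\|J_F u(X)\|_2^{\,2}$. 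This is a direct reduction to Proposition \ref{prop_1} with $g(X)$ replaced by $J_F(X)\,u(X)$, so I would present it as such rather than re-deriving it from scratch.

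To summarise, the only genuinely non-trivial ingredient is the differentiation-under-the-integral step; the existence claim then reduces to one-variable calculus, and the affine refinement reduces syntactically to the already-proved output-side proposition. Notably, no Hessian control on $F$ is needed for the existence claim, because only the one-sided directional derivative at $\delta = 0$ actually enters.
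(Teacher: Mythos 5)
Your proposal is correct and follows essentially the same route as the paper: differentiate the per-sample squared loss in $\delta$ via the chain rule, justify $\mathcal R_{\mathrm{in}}'(0)=-\mathbb E[\langle r, J_F u\rangle]$ by dominated convergence under the stated integrability assumptions, conclude strict decrease for small $\delta>0$ from the negative (right-)derivative, and obtain $\delta^\star$ from the exact convex quadratic in the affine/linearized case by reduction to Proposition \ref{prop_1} with $g$ replaced by $J_F u$. The only cosmetic difference is that the paper routes the derivative computation through the fundamental-theorem-of-calculus representation of $F(X+\delta u)$ before applying dominated convergence, which changes nothing substantive.
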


\begin{proof}
    
    For each $X$, by the fundamental theorem of calculus in Banach spaces,
    \begin{equation}
    F \big(X+\delta u(X)\big)=F(X)+\int_{0}^{\delta} J_F \big(X+t\,u(X)\big)\,u(X)\,dt.
    \end{equation}
    Hence,
    \begin{equation}
    R_\delta=R_0-\int_{0}^{\delta} J_F \big(X+t\,u(X)\big)\,u(X)\,dt.
    \end{equation}
    
    Let $F(\delta):=\mathcal R_{\text{in}}(\delta)=\tfrac12\,\mathbb E\|R_\delta\|^2$. Using $\tfrac{d}{d\delta}\|v\|^2=2\langle v,v'\rangle$,
    \begin{equation}
    F'(\delta)=\mathbb E\,\big\langle R_\delta,\,-J_F \big(X+\delta u(X)\big)\,u(X)\big\rangle.
    \end{equation}
    Under the domination assumption, dominated convergence allows $\delta\to 0$ inside the expectation, giving
    \begin{equation}
    F'(0)=-\,\mathbb E\langle R_0, J_F(X)u(X)\rangle.
    \end{equation}
    If $C>0$, then $F'(0)=-C<0$. By continuity of $F'$ near $0$ (again from dominated convergence and continuity of $J_F$), there exists $\varepsilon>0$ so that $F$ is strictly decreasing on $(0,\varepsilon)$, hence $F(\delta)<F(0)$ for all $\delta\in(0,\varepsilon)$.
    
    If, in addition, $\|J_F(z)\|\le L_F$ and $\|u(X)\|\le U(X)$ with $\mathbb E U(X)^2<\infty$, then for $|\delta|\le 1$, we have
    \begin{equation}
    \|F(X+\delta u)-F(X)\|\le L_F\,|\delta|\,\|u(X)\|,
    \end{equation}
    and the same quadratic expansion as in Proposition \ref{prop_1} yields
    \begin{equation}
    \mathcal R_{\text{in}}(\delta)
    \le \tfrac12\,\mathbb E\|R\|^2-\delta\,\mathbb E \langle R, J_F u\rangle
    +\tfrac12\,\delta^2\,L_F^2\,\mathbb E\|u\|^2,
    \end{equation}
    making the ``improvement for small $\delta$'' explicit whenever $\mathbb E\langle R,J_F u\rangle>0$.
\end{proof}

\subsubsection{Proof of step 1: exact small-step decrease}

\begin{proof}
Define, for each $(X,y)$,
\begin{equation}
f(\delta;x,y)
:= \tfrac12\,\big\|y - F(X+\delta u(X))\big\|_2^{\,2}
= \tfrac12\,\big\|r(\delta;x)\big\|_2^{\,2},
\quad
r(\delta;x):= y - F(X+\delta u(X)).
\end{equation}
By the chain rule,
\begin{equation}
\frac{\partial}{\partial \delta} f(\delta;x,y)
= \left\langle r(\delta;x),\,\frac{\partial}{\partial \delta} r(\delta;x)\right\rangle
= -\Big\langle r(\delta;x),\,J_F \big(X+\delta u(X)\big)\,u(X)\Big\rangle.
\end{equation}
By the domination assumptions and Lemma \ref{lemma_add}, we can pass the derivative through the expectation to get
\begin{equation}
\mathcal R_{\mathrm{in}}'(\delta)
= \mathbb E \left[\frac{\partial}{\partial \delta} f(\delta;x,y)\right]
= -\,\mathbb E \left[\Big\langle r(\delta;x),\,J_F \big(X+\delta u(X)\big)\,u(X)\Big\rangle\right].
\end{equation}
Evaluating at $\delta=0$,
\begin{equation}
\mathcal R_{\mathrm{in}}'(0)
= -\,\mathbb E \left[\langle r(X),\,J_F(X)\,u(X)\rangle\right]
= -A.
\end{equation}
If $A>0$, then $\mathcal R_{\mathrm{in}}'(0)<0$. By continuity of $\mathcal R_{\mathrm{in}}'$ at $0$, there exists $\varepsilon>0$ such that $\mathcal R_{\mathrm{in}}'(\delta)\le -\tfrac{A}{2}<0$ for all $\delta\in[0,\varepsilon]$. Therefore, for any $\delta\in(0,\varepsilon]$,
\begin{equation}
\mathcal R_{\mathrm{in}}(\delta)-\mathcal R_{\mathrm{in}}(0)
= \int_{0}^{\delta} \mathcal R_{\mathrm{in}}'(t)\,dt
\le -\tfrac{A}{2}\,\delta  < 0,
\end{equation}
which proves the strict risk decrease for sufficiently small positive $\delta$.
\end{proof}

\subsubsection{Proof of step 2: closed-form $\delta^\star$ under an affine $F$}

\begin{proof}
Assume $F$ is affine: $F(X)=Ax+b$ with a constant matrix $A\in\mathbb R^{H\times d}$. Then $J_F \equiv A$ and
\begin{equation}
F \big(X+\delta u(X)\big) = F(X) + \delta A u(X).
\end{equation}
Hence, 
\begin{equation}
\mathcal R_{\mathrm{in}}(\delta)
= \tfrac12\,\mathbb E \left[\big\|r(X)-\delta\,A u(X)\big\|_2^{\,2}\right]
= \tfrac12\,\mathbb E \left[\|r(X)\|^2\right]
- \delta\,\mathbb E \left[\langle r(X),A u(X)\rangle\right]
+ \tfrac12\,\delta^2\,\mathbb E \left[\|A u(X)\|^2\right].
\end{equation}

This is a strictly convex quadratic in $\delta$ provided $\mathbb E[\|A u(X)\|^2]>0$. Differentiating and setting to 0,
\begin{equation}
\mathcal R_{\mathrm{in}}'(\delta)
= -\,\mathbb E \left[\langle r(X),A u(X)\rangle\right]
+ \delta\,\mathbb E \left[\|A u(X)\|^2\right]
= 0
\end{equation}
yields the unique minimizer
\begin{equation}
\delta^\star
= \frac{\mathbb E \left[\langle r(X),A u(X)\rangle\right]}
{\mathbb E \left[\|A u(X)\|^2\right]}
= \frac{\mathbb E \left[\langle r(X),J_F u(X)\rangle\right]}
{\mathbb E \left[\|J_F u(X)\|^2\right]}.
\end{equation}
This completes the proof for affine $F$.
The same expression arises if, instead of assuming affine $F$, we optimize the first-order surrogate obtained by linearizing $F$ at $\delta=0$:
\begin{equation}
F \big(X+\delta u(X)\big) \approx  F(X)+\delta\,J_F(X)\,u(X),
\end{equation}
which leads to the quadratic proxy
\begin{equation}
\widetilde{\mathcal R}_{\mathrm{in}}(\delta)
:= \tfrac12\,\mathbb E \left[\big\|r(X)-\delta\,J_F(X)u(X)\big\|^2\right],
\end{equation}
whose unique minimizer is the same $\delta^\star$ as above.
Further, we denote by $H_F(X)[v,w]\in \mathbb R^H$ the second directional derivative of $F$ at $X$ along $v,w$, then
\begin{equation}
\mathcal R_{\mathrm{in}}''(0)
= \mathbb E \left[\|J_F(X)u(X)\|^2 - \big\langle r(X),\,H_F(X)[u(X),u(X)]\big\rangle\right].
\end{equation}
If there exists $\eta\in[0,1)$ such that
\begin{equation}
\big|\mathbb E \left[\langle r(X),\,H_F(X)[u(X),u(X)]\rangle\right]\big|
\le \eta\,\mathbb E \left[\|J_F(X)u(X)\|^2\right],
\end{equation}
then $\mathcal R_{\mathrm{in}}''(0)\in [(1-\eta)B_0,\,(1+\eta)B_0]$ where $B_0=\mathbb E \left[\|J_F u\|^2\right]$. In that case, the true local minimizer $\delta^\dagger$ of $\mathcal R_{\mathrm{in}}$ satisfies the bracket
\begin{equation}
\frac{A}{(1+\eta)B_0}  \le  \delta^\dagger  \le  \frac{A}{(1-\eta)B_0},
\end{equation}
quantifying how curvature perturbs the first-order optimizer. When $F$ is affine or the curvature term averages to zero, $\eta=0$ and $\delta^\dagger=\delta^\star$.
\end{proof}

\subsection{Proof of Proposition \ref{prop_4}}
\label{app_prop_4}

\begin{proof}
By Lipschitzness of $F$:
\begin{equation}
    \|\tilde y-\hat y\|=\|F(\tilde X)-F(X)\|\le L_F\|\tilde X-X\|
    = L_F \,\delta\, \|A^{\mathrm{in}}_\phi(X)\|.
\end{equation}

According to $\|A^{\text{in}}_\phi(X)\|_\infty\le 1$, $\|A^{\text{out}}_\phi(\hat Y,X)\|_\infty\le 1$ and $\delta\in(0,\delta_{\max}]$ with $\delta_{\max}\le 1$, 
we have $\|A^{\mathrm{in}}_\phi(X)\|\le \sqrt{Ld}\|A^{\mathrm{in}}_\phi(X)\|_\infty\le \sqrt{Ld}$. Combining yields the claim.
\end{proof}

\subsection{Proof of Corollary \ref{col_1}}
\label{app_col_1}

\begin{proof}
    Coordinatewise, $\tilde x_i-x_i = x_i\big(e^{\delta a_i}-1\big)$. By the mean value theorem for $t\mapsto e^t$, for each $i$ there exists $\xi_i\in(0,\delta a_i)$ such that
    \begin{equation}
        e^{\delta a_i} - 1 = \delta a_i\, e^{\xi_i} \quad\Rightarrow\quad 
        |\tilde x_i-x_i| = |x_i|\, \delta |a_i|\, e^{\xi_i}\le B_X\, \delta |a_i|\, e^{|\xi_i|}\le B_X\, \delta |a_i|\, e^{\delta\|a\|_\infty}.
    \end{equation}
    
    According to $\|A^{\text{in}}_\phi(X)\|_\infty\le 1$, $\|A^{\text{out}}_\phi(\hat Y,X)\|_\infty\le 1$ and $\delta\in(0,\delta_{\max}]$ with $\delta_{\max}\le 1$,
    we have $\|a\|_\infty\le 1$, hence $e^{\delta\|a\|_\infty}\le e^{\delta}$. Summing squares,
    \begin{equation}
        \|\tilde x-x\| = \sqrt{\sum_i |\tilde x_i-x_i|^2}\le \sqrt{\sum_i (B_X\,\delta |a_i|\, e^{\delta})^2}
    = \delta e^{\delta} B_X \|a\|.
    \end{equation}
    Then apply  Lipschitz step in Proposition \ref{prop_4}, we have
    \begin{equation}
    \|\tilde y-\hat y\|=\|F(\tilde X)-F(X)\|\le L_F\|\tilde x-x\|\le \delta e^{\delta} L_F B_X \|a\|.
    \end{equation}
    For $\delta\le 1$, $e^{\delta}\le e$, so the bound is $O(\delta)$.
\end{proof}

\subsection{Proof of Theorem \ref{th_1}}
\label{app_th_1}

\begin{proof}
     By $\beta$-smoothness with $u=\tilde y$, $v=\hat y$,
    \begin{equation}
    \ell(\tilde y,y)\le \ell(\hat y,y) + \nabla\ell(\hat y,y)^\top(\tilde y-\hat y) + \tfrac{\beta}{2}\|\tilde y-\hat y\|^2
    = \ell(\hat y,y) + \delta \langle g,d\rangle + \tfrac{\beta}{2}\delta^2\|d\|^2.
    \end{equation}
    By alignment condition, $\langle g,d\rangle \le -\alpha \|g\|\|d\|$. Substitute:
    \begin{equation}
    \ell(\tilde y,y)-\ell(\hat y,y) \le -\delta \alpha \|g\|\|d\| + \tfrac{\beta}{2}\delta^2\|d\|^2.
    \end{equation}
    The RHS is a convex quadratic in $\delta$ with unique minimizer $\delta^\star=\frac{\alpha\|g\|}{\beta\|d\|}$. Plugging $\delta^\star$ gives $-\frac{\alpha^2}{2\beta}\|g\|^2$. Strict descent holds whenever the derivative at 0 is negative and the second-order term does not dominate, equivalently $\delta \in (0, \frac{2\alpha\|g\|}{\beta\|d\|})$.
\end{proof}

\subsection{Proof of Theorem \ref{th_2}}
\label{app_th_2}

\begin{proof}
    By first-order Taylor expansion of $F$ at $x$,
    \begin{equation}
    F(x+\delta v) = F(X) + \delta Jv + r_F(\delta), \quad \text{with}\quad \|r_F(\delta)\| = O(\delta).
    \end{equation}
    Set $\delta:=\delta s + r_F(\delta)$, so $\tilde y = \hat y + \delta$. Apply $\beta$-smoothness of $\ell$:
    \begin{equation}
    \ell(\hat y+\delta,y) \le \ell(\hat y,y) + \langle g,\delta\rangle + \tfrac{\beta}{2}\|\delta\|^2.
    \end{equation}
    Then, compute the terms:
    \begin{equation}
    \langle g,\delta\rangle = \delta \langle g,s\rangle + \langle g,r_F(\delta)\rangle
    \quad\text{and}\quad
    \|\delta\|^2 = \delta^2\|s\|^2 + 2\delta\langle s,r_F(\delta)\rangle + \|r_F(\delta)\|^2.
    \end{equation}
    Since $\|r_F(\delta)\|=O(\delta)$, we have $\langle g,r_F(\delta)\rangle = O(\delta)$ and $\|\delta\|^2 = \delta^2\|s\|^2 + O(\delta^2)$. Therefore,
    \begin{equation}
    \ell(F(x+\delta v),y) \le \ell(\hat y,y) + \delta \langle g,s\rangle + \tfrac{\beta}{2}\delta^2\|s\|^2 + O(\delta^2).
    \end{equation}
    If $\langle g,s\rangle \le -\alpha\|g\|\|s\|$, then for sufficiently small $\delta$ the negative linear term dominates the $O(\delta^2)$ remainder, yielding strict descent. Optimizing the quadratic upper bound in $\delta$ gives the minimizer $\delta^\star=\frac{\alpha\|g\|}{\beta\|s\|}$ and value $-\frac{\alpha^2}{2\beta}\|g\|^2$ up to $O(1)$, establishing the last claim.
\end{proof}

\subsection{Proof of Theorem \ref{prop_5}}
\label{app_prop_5}

\begin{proof}
    We formalize the two claims: (i) $O(\delta)$ bound on prediction drift, and (ii) loss upper bound under composition. Let the composed edit be: $\tilde x=x+\delta v$, $\hat y' := F(\tilde X)$, and $\tilde y := \hat y' + \delta d(\hat y',X)$. As before, $\hat y=F(X)$.
    
    (i) For $O(\delta)$ bound on prediction drift, using the triangle inequality, we have:   
\begin{equation}
    \|\tilde y-\hat y\| \le \|\hat y'-\hat y\| + \delta\|d(\hat y',X)\|.
    \end{equation}
    Further, according to $\|A^{\text{in}}_\phi(X)\|_\infty\le 1$, $\|A^{\text{out}}_\phi(\hat Y,X)\|_\infty\le 1$ and $\delta\in(0,\delta_{\max}]$ with $\delta_{\max}\le 1$, we have $\|\hat y'-\hat y\|=\|F(\tilde X)-F(X)\|\le L_F\|\tilde x-x\|= \delta L_F\|v\|$. The bound follows.
    
    (ii) For loss upper bound under composition, by definition we have 
    \begin{equation}
    \tilde y = \hat y' + \delta d' = \hat y + \delta s + r_F(\delta) + \delta d'.
    \end{equation}
    Set $\Delta := \delta(s+d') + r_F(\delta)$. By $\beta$-smoothness, we have
    \begin{equation}
    \ell(\hat y+\Delta,y) \le \ell(\hat y,y) + \langle g,\Delta\rangle + \tfrac{\beta}{2}\|\Delta\|^2,
    \end{equation}
    which can be decomposed into:
    \begin{align}
    & \langle g,\Delta\rangle = \delta\langle g, s+d'\rangle + \langle g,r_F(\delta)\rangle, \\
    \text{where} & \quad \|\Delta\|^2 = \delta^2\|s+d'\|^2 + 2\delta\langle s+d', r_F(\delta)\rangle + \|r_F(\delta)\|^2.
    \end{align}
    Since $\|r_F(\delta)\|=O(\delta)$, we have $\langle g,r_F(\delta)\rangle=O(\delta)$ and $\|\Delta\|^2=\delta^2\|s+d'\|^2 + O(\delta^2)$. Thus
    \begin{equation}
    \ell(\tilde y,y) \le \ell(\hat y,y) + \delta\langle g,s+d'\rangle + \tfrac{\beta}{2}\delta^2\|s+d'\|^2 + O(\delta^2).
    \end{equation}
    Here, if $\langle g,s+d'\rangle \le -\alpha\|g\|\,\|s+d'\|$, the linear term is strictly negative whenever $s+d'\neq 0$. For sufficiently small $\delta$, the negative linear term dominates the $O(\delta^2)$ remainder, giving strict descent.
    \begin{remark}
        If a learned gate $\gamma\in[0,1]^q$ combines input- and output-induced steps as $s_\gamma = \gamma\odot s + (1-\gamma)\odot d'$, then alignment for $s_\gamma$ follows from mild conditions (e.g., selecting $\gamma$ to minimize $\langle g, s_\gamma\rangle$ subject to $\gamma\in[0,1]^q$ ensures $\langle g, s_\gamma\rangle\le\min\{\langle g,s\rangle,\langle g,d'\rangle\}$).
    \end{remark}
\end{proof}

\section{Experimental Setup and Results}

\subsection{Dataset}
\label{app_dataset}

\subsection{Commonly used TS datasets}

The information of the experiment datasets used in this paper are summarized as follows: (1) Electricity Transformer Temperature (ETT) dataset \cite{Zhou2021Informer}, which contains the data collected from two electricity transformers in two separated counties in China, including the load and the oil temperature recorded every 15 minutes (ETTm) or 1 hour (ETTh) between July 2016 and July 2018. (2) Electricity (ECL) dataset \footnote[1]{https://archive.ics.uci.edu/ml/datasets/ElectricityLoadDiagrams20112014} collects the hourly electricity consumption of 321 clients (each column) from 2012 to 2014. (3) Exchange \cite{lai2018modeling} records the current exchange of 8 different countries from 1990 to 2016. (4) Traffic dataset \footnote[2]{http://pems.dot.ca.gov} records the occupation rate of freeway system across State of California measured by 861 sensors. (5) Weather dataset \footnote[3]{https://www.bgc-jena.mpg.de/wetter} records every 10 minutes for 21 meteorological indicators in Germany throughout 2020.
The detailed statistics information of the datasets is shown in Table \ref{tb_dataset}.

\begin{table}[!ht]
  \centering
  \caption{Details of the seven TS datasets. }
  \label{tb_dataset}
  \fontsize{9pt}{10pt}\selectfont
  \setlength{\tabcolsep}{1mm}
    \begin{tabular}{cccc}
      \toprule
       Dataset    & length  & features & frequency \\
      \midrule
      ETTh1       & 17,420  & 7       & 1h \\
      ETTh2       & 17,420  & 7       & 1h \\
      ETTm1       & 69,680  & 7       & 15m \\
      ETTm2       & 69,680  & 7       & 15m \\
      Electricity & 26,304  & 321     & 1h  \\
      Exchange    & 7,588   & 8       & 1d  \\
      Traffic     & 17,544  & 862     & 1h  \\
      Weather     & 52,696  & 21      & 10m \\
      \bottomrule
    \end{tabular}
  \vspace{-1em}
\end{table}

\subsection{Training objective}
\label{app_obj_func}

We train $\theta$ on $\mathcal{D}$ while backpropagating through $F$ but not updating it. Let $\tilde Y_\theta(X)$ denote the adapted prediction. For point forecasts we minimize a horizon-aware loss:
Here are explicit formulas for each loss term when the input‐adapter is a learnable mask $M(X;\phi)\in[0,1]^{L\times d}$ applied as $X' = X \odot M$. Let $\mathcal D=\{(X^{(i)},Y^{(i)})\}_{i=1}^N$, $\hat Y^{(i)} = F(X^{(i)}\!\odot M(X^{(i)};\phi))$, and $H, m$ be horizon and target dims. Expectations $\mathbb E$ below are over the empirical data distribution (mini-batches in practice).

MSE (point forecasts): 
\begin{equation}
    \mathcal L_{\text{pred}}^{\text{MSE}}
= \mathbb E_{(X,Y)\sim \mathcal D}\!
\left[ \frac{1}{H m}\sum_{h=1}^{H}\sum_{k=1}^{m}
w_h\,\big(\hat Y_{h,k}-Y_{h,k}\big)^2 \right].
\end{equation}

MAE (point forecasts): 
\begin{equation}
\mathcal L_{\text{pred}}^{\text{MAE}}
= \mathbb E\!
\left[ \frac{1}{H m}\sum_{h=1}^{H}\sum_{k=1}^{m}
w_h\,\big|\hat Y_{h,k}-Y_{h,k}\big| \right].
\end{equation}

Pinball (quantile $\tau\in\mathcal T$). If $\hat Y^\tau$ predicts the $\tau$-quantile,
\begin{equation}
\mathcal L_{\text{pred}}^{\text{QB}}
= \mathbb E\!\left[
\frac{1}{|\mathcal T| H m}\!
\sum_{\tau\in\mathcal T}\sum_{h,k}
\rho_\tau\!\big(Y_{h,k}-\hat Y^\tau_{h,k}\big)
\right],\quad
\rho_\tau(u)=u\big(\tau-\mathbf 1\{u<0\}\big).
\end{equation}

Sparsity (L1) on the mask:
\begin{equation}
\mathcal L_{\ell_1}
= \mathbb E_{X\sim\mathcal D}\!
\left[\frac{1}{L d}\sum_{t=1}^{L}\sum_{j=1}^{d} M_{t,j}(X;\phi)\right].
\end{equation}

Entropy (pushes mask toward 0 or 1):
\begin{equation}
\mathcal L_{\text{ent}}
= \mathbb E_{X}\!\left[
-\frac{1}{L d}\sum_{t,j}
\Big( M_{t,j}\log(M_{t,j}+\delta) + (1-M_{t,j})\log(1-M_{t,j}+\delta) \Big)
\right].
\end{equation}

Temporal smoothness:
\begin{equation}
\mathcal L_{\text{TV}}
= \mathbb E_{X}\!\left[
\frac{1}{(L-1)d}\sum_{t=2}^{L}\sum_{j=1}^{d}
\big| M_{t,j}(X;\phi) - M_{t-1,j}(X;\phi) \big|
\right].
\end{equation}

Budget (fraction of active entries not to exceed $\kappa$):
\begin{equation}
\bar m(X;\phi)=\frac{1}{L d}\sum_{t,j} M_{t,j}(X;\phi).
\end{equation}
A hinge penalty enforces $\bar m\le \kappa$:
\begin{equation}
\mathcal L_{\text{bud}}
= \mathbb E_{X}\!\left[\,\big(\bar m(X;\phi)-\kappa\big)_+\,\right],
\qquad (u)_+ \equiv \max\{u,0\}.
\end{equation}

Group sparsity:

\begin{equation}
\mathcal L_{\text{group}}
= \mathbb E_{X}\!\left[
\frac{1}{d}\sum_{j=1}^{d}
\sqrt{\sum_{t=1}^{L} M_{t,j}(X;\phi)^2 + \delta}
\right].
\end{equation}

\subsection{Online learning setup}

During online testing, we set the batch size to 1 to ensure that data arrives in order. Meanwhile, we used a streaming buffer, where only one updated data point is cached at each moment/iteration (avoid label leakage raised by \cite{liang2024act, lau2025dsof}, while returning a complete sample from a previous moment. E.g., at time $t$, the input used for online update returned from the buffer is $X_{t-H-L:t-H}$, the label is $X_{t-H:t}$, where $H$ is the prediction length and $L$ is the input length.

\subsection{Training Details of Ada-X+Y}
\label{app_detail_ada_xy}

Ada-X+Y is composed of Ada-X and Ada-Y, and Ada-X and Ada-Y are \textbf{trained jointly} in an end-to-end manner, not sequentially. We minimize a single combined loss $\mathcal{L}$ over the union of parameters $A_{\theta}^{in}$ (Ada-X) and $A_{\theta}^{out}$ (Ada-Y). The forward pass is:
\begin{align}
    \hat Y & =F(X + \delta A_{\theta}^{in}(X)) \\
    \tilde Y & = \hat Y + \delta A_{\theta}^{out}(\hat Y)
\end{align}

During the backward pass, gradients flow from the loss through Ada-Y (Eq. 2), then through the backbone $F$, and finally to Ada-X (Eq. 1). This ensures that Ada-X learns input perturbations that specifically help the backbone produce features that Ada-Y can best correct.

Experimental Setup: In our experiments, we instantiate two separate Adam optimizers (both learning rate are 1E-4) for modular flexibility. However, they are stepped simultaneously after a single backward pass, making the process equivalent to optimizing a joint objective. As derived in Proposition \ref{prop_5}, this joint update rule maintains the $O(\delta)$ drift bounds and descent guarantees, ensuring the two adapters do not destabilize each other.


\subsection{Using $\delta$-Adapters to Improve Multivariate Time Series}
\label{app_mts}

Tables \ref{tb_xy_cpr}, \ref{tb_post_mts} and Figure \ref{fig_xy_compare} show that $\delta$-Adapter provides consistent improvements across multiple forecasting models. For nearly all datasets, Ada-X and Ada-Y lead to lower prediction errors compared to the original models, demonstrating that the proposed adapters generalize well to diverse forecasting architectures. Notably, Ada-X again delivers the largest gains, particularly on challenging datasets such as Exchange, Traffic, and ETT series, confirming that refining the input signals before model inference is the most impactful strategy. These results further validate that $\delta$-Adapter is a broadly applicable, efficient, and effective enhancement method for modern time series forecasting.

\subsection{$\delta$-Adapter's validation and testing performance changes with epochs}
\label{app_perfom_changes}

We also present the performance changes of additive and multiplicative adapters on different datasets over epochs (the blank is due to early stopping). In Figure \ref{fig_train_process}, we visualize the changes in validation and test losses of $\delta$-Adapter across different datasets.
Across Electricity, Traffic, and Weather, adding Ada-X+Y drives the test MSE consistently below the original frozen model from the very first epoch and then decreases further before plateauing after ~5 epochs. Validation and test curves track closely (no divergence), indicating stable training without overfitting. The gains are monotonic or near-monotonic on Electricity and Traffic, while Weather shows an immediate, steady improvement that remains well under the original baseline. Overall, Ada-X+Y delivers fast convergence and robust generalization across datasets.
These experiments show that the loss curve of the $\delta$-Adapter gradually decreases with epochs and has stable and consistent boundaries. Meanwhile, the composite adapter (X+Y) can achieve better performance (Stability Analysis of Section 3), which also proves the robustness of the $\delta$-Adapter and the correctness of its theoretical foundation.

\begin{figure}[h]
  \centering
  \centerline{\includegraphics[width=\textwidth]{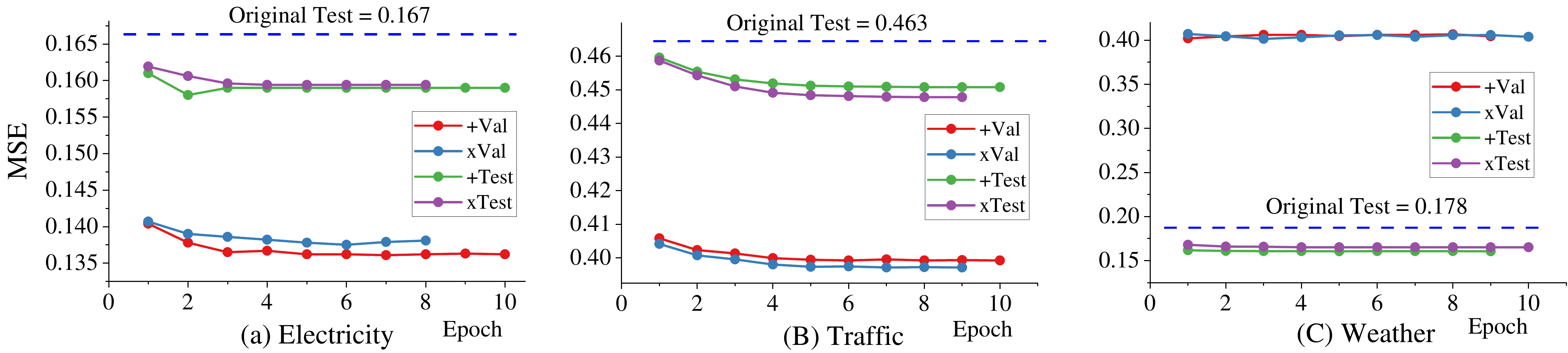}}
  \caption{Validation and testing performance changes with epochs when adding $\delta$-Adapter.}
  \label{fig_train_process}
\end{figure}

\begin{table*} 
    \centering
    \caption{Performances of the forecaster $F$ and $\delta$-Adapter under batch or online training.}
    \label{tb_xy_cpr}
    \resizebox{\textwidth}{!}
    {
        \begin{tabular}{c|c|ccccccccc}
        \toprule
        Dataset                  & Model       & \thead{Original \\ (Batch)} & \thead{Fine-Turning \\(Batch)} & \thead{Continue \\ (Online)} & \thead{Ada-X \\(Batch)} & \thead{Ada-X \\(Online)} & \thead{Ada-Y \\(Batch)} & \thead{Ada-Y \\(Online)} & \thead{Ada-X+Y \\(Batch)} & \thead{Ada-X+Y \\(Online)} \\ \toprule
        \multirow{2}{*}{Weather} & DistPred    & 0.1710            & 0.1715               & 0.1700                  & 0.1662        & 0.1654         & 0.1629        & 0.1623         & 0.1602          & 0.1560            \\ 
                                 & iTransformer & 0.1731           & 0.1724               & 0.1721                & 0.1706        & 0.169          & 0.1631        & 0.1634         & 0.1609          & 0.1609           \\ \toprule
        \multirow{2}{*}{Traffic} & DistPred    & 0.4229           & 0.4229               & 0.4229                & 0.4182        & 0.4179         & 0.4117        & 0.4119         & 0.4028          & 0.4029           \\ 
                                 & iTransformer & 0.4437           & 0.4411               & 0.4414                & 0.4361        & 0.4360          & 0.4294        & 0.4294         & 0.4202          & 0.4202           \\ \toprule
        \multirow{2}{*}{ELC}     & DistPred    & 0.1546           & 0.1546               & 0.1545                & 0.1483        & 0.1476         & 0.1483        & 0.1474         & 0.1436          & 0.1432           \\
                                 & iTransformer & 0.1655           & 0.1646               & 0.1636                & 0.1601        & 0.1596         & 0.1566        & 0.1562         & 0.1527          & 0.1515           \\ \bottomrule
        \end{tabular}
    }
\end{table*}

\begin{table*} 
    \centering
    \caption{Multivariate time series forecasting results on the benchmark datasets.}
    \label{tb_post_mts}
    \resizebox{\textwidth}{!}
    {
        \begin{tabular}{cc|ccc|ccc|ccc|ccc|ccc}
        \toprule
        \multicolumn{2}{c|}{Dataset}         & \multicolumn{3}{c|}{DistPred} & \multicolumn{3}{c|}{iTransformer} & \multicolumn{3}{c|}{FourierGNN} & \multicolumn{3}{c|}{FreTS} & \multicolumn{3}{c}{Autoformer} \\ \toprule
                          & Length & Original    & Ada-X   & Ada-Y   & Original    & Ada-X    & Ada-Y   & Original   & Ada-X   & Ada-Y   & Original  & Ada-X & Ada-Y & Original   & Ada-X   & Ada-Y   \\ \toprule
        \multirow{5}{*}{\rotatebox{90}{ELC}}
                                 & 96     & 0.155       & 0.149   & 0.148   & 0.163       & 0.160    & 0.157   & 0.250      & 0.235   & 0.224   & 0.189     & 0.183 & 0.176 & 0.228      & 0.211   & 0.221   \\
                                 & 192    & 0.169       & 0.166   & 0.162   & 0.175       & 0.173    & 0.167   & 0.255      & 0.245   & 0.230   & 0.193     & 0.189 & 0.180 & 0.437      & 0.383   & 0.384   \\
                                 & 336    & 0.185       & 0.181   & 0.176   & 0.193       & 0.189    & 0.182   & 0.267      & 0.256   & 0.240   & 0.207     & 0.203 & 0.192 & 0.612      & 0.590   & 0.527   \\
                                 & 720    & 0.221       & 0.217   & 0.190   & 0.231       & 0.226    & 0.218   & 0.298      & 0.284   & 0.268   & 0.246     & 0.239 & 0.229 & 0.782      & 0.767   & 0.670   \\ \cline{2-17}
                                 & Avg    & 0.182       & 0.178   & 0.169   & 0.190       & 0.187    & 0.181   & 0.267      & 0.255   & 0.241   & 0.209     & 0.203 & 0.194 & 0.515      & 0.488   & 0.450   \\ \toprule
        \multirow{5}{*}{\rotatebox{90}{ETTh1}}
                                 & 96     & 0.389       & 0.385   & 0.384   & 0.390       & 0.385    & 0.386   & 0.506      & 0.502   & 0.503   & 0.397     & 0.398 & 0.396 & 0.449      & 0.437   & 0.444   \\
                                 & 192    & 0.451       & 0.448   & 0.446   & 0.444       & 0.444    & 0.440   & 0.540      & 0.540   & 0.540   & 0.458     & 0.456 & 0.453 & 0.571      & 0.566   & 0.558   \\
                                 & 336    & 0.498       & 0.493   & 0.497   & 0.479       & 0.478    & 0.483   & 0.583      & 0.585   & 0.584   & 0.507     & 0.508 & 0.511 & 0.656      & 0.644   & 0.638   \\
                                 & 720    & 0.505       & 0.502   & 0.503   & 0.504       & 0.502    & 0.514   & 0.615      & 0.634   & 0.632   & 0.568     & 0.574 & 0.563 & 0.695      & 0.686   & 0.669   \\ \cline{2-17}
                                 & Avg    & 0.461       & 0.457   & 0.458   & 0.454       & 0.453    & 0.456   & 0.561      & 0.566   & 0.565   & 0.482     & 0.484 & 0.481 & 0.593      & 0.583   & 0.577   \\ \toprule
        \multirow{5}{*}{\rotatebox{90}{ETTh2}}   
                                 & 96     & 0.303       & 0.300   & 0.301   & 0.296       & 0.293    & 0.293   & 0.396      & 0.383   & 0.388   & 0.342     & 0.314 & 0.330 & 0.375      & 0.358   & 0.375   \\
                                 & 192    & 0.378       & 0.372   & 0.373   & 0.383       & 0.380    & 0.380   & 0.507      & 0.468   & 0.477   & 0.468     & 0.427 & 0.437 & 0.438      & 0.432   & 0.444   \\
                                 & 336    & 0.447       & 0.438   & 0.443   & 0.429       & 0.427    & 0.434   & 0.558      & 0.500   & 0.500   & 0.548     & 0.501 & 0.506 & 0.464      & 0.460   & 0.459   \\
                                 & 720    & 0.431       & 0.433   & 0.431   & 0.445       & 0.440    & 0.453   & 0.718      & 0.646   & 0.660   & 0.791     & 0.725 & 0.717 & 0.473      & 0.470   & 0.494   \\ \cline{2-17}
                                 & Avg    & 0.390       & 0.386   & 0.387   & 0.388       & 0.385    & 0.390   & 0.545      & 0.499   & 0.506   & 0.537     & 0.492 & 0.498 & 0.438      & 0.420   & 0.423   \\ \toprule
        
        \multirow{5}{*}{\rotatebox{90}{ETTm1}}   
                                 & 96     & 0.339       & 0.324   & 0.330   & 0.345       & 0.334    & 0.334   & 0.405      & 0.399   & 0.397   & 0.340     & 0.334 & 0.337 & 0.586      & 0.464   & 0.569   \\
                                 & 192    & 0.384       & 0.374   & 0.378   & 0.382       & 0.369    & 0.374   & 0.435      & 0.427   & 0.430   & 0.380     & 0.378 & 0.380 & 0.627      & 0.572   & 0.602   \\
                                 & 336    & 0.416       & 0.409   & 0.404   & 0.431       & 0.423    & 0.419   & 0.464      & 0.457   & 0.455   & 0.417     & 0.414 & 0.415 & 0.691      & 0.650   & 0.656   \\
                                 & 720    & 0.510       & 0.487   & 0.495   & 0.511       & 0.501    & 0.496   & 0.519      & 0.506   & 0.507   & 0.483     & 0.478 & 0.474 & 0.754      & 0.729   & 0.720   \\ \cline{2-17}
                                 & Avg    & 0.412       & 0.399   & 0.402   & 0.417       & 0.407    & 0.406   & 0.456      & 0.447   & 0.447   & 0.405     & 0.401 & 0.401 & 0.664      & 0.604   & 0.637   \\ \toprule
        
        \multirow{5}{*}{\rotatebox{90}{ETTm2}}
                                 & 96     & 0.179       & 0.175   & 0.179   & 0.182       & 0.179    & 0.183   & 0.220      & 0.204   & 0.213   & 0.191     & 0.179 & 0.187 & 0.271      & 0.236   & 0.288   \\ 
                                 & 192    & 0.245       & 0.242   & 0.245   & 0.255       & 0.251    & 0.253   & 0.329      & 0.289   & 0.322   & 0.275     & 0.241 & 0.270 & 0.290      & 0.286   & 0.289   \\
                                 & 336    & 0.309       & 0.304   & 0.302   & 0.327       & 0.317    & 0.323   & 0.380      & 0.359   & 0.380   & 0.342     & 0.309 & 0.333 & 0.359      & 0.350   & 0.350   \\
                                 & 720    & 0.406       & 0.394   & 0.404   & 0.435       & 0.423    & 0.414   & 0.852      & 0.694   & 0.842   & 0.531     & 0.413 & 0.501 & 0.435      & 0.431   & 0.432   \\ \cline{2-17}
                                 & Avg    & 0.285       & 0.279   & 0.282   & 0.300       & 0.292    & 0.293   & 0.445      & 0.386   & 0.439   & 0.335     & 0.285 & 0.323 & 0.339      & 0.316   & 0.320   \\ \toprule
        
        \multirow{5}{*}{\rotatebox{90}{Exchange}}
                                 & 96     & 0.084       & 0.088   & 0.084   & 0.099       & 0.102    & 0.099   & 0.106      & 0.118   & 0.105   & 0.105     & 0.098 & 0.105 & 0.195      & 0.170   & 0.179   \\ 
                                 & 192    & 0.190       & 0.186   & 0.182   & 0.180       & 0.173    & 0.168   & 0.208      & 0.216   & 0.203   & 0.186     & 0.181 & 0.187 & 0.260      & 0.243   & 0.246   \\
                                 & 336    & 0.319       & 0.281   & 0.294   & 0.352       & 0.304    & 0.329   & 0.365      & 0.396   & 0.368   & 0.383     & 0.380 & 0.386 & 0.437      & 0.414   & 0.402   \\
                                 & 720    & 0.809       & 0.653   & 0.714   & 0.901       & 0.814    & 0.800   & 0.841      & 0.843   & 0.841   & 0.989     & 0.989 & 1.011 & 1.144      & 1.095   & 1.021   \\ \cline{2-17}
                                 & Avg    & 0.350       & 0.302   & 0.319   & 0.383       & 0.348    & 0.349   & 0.380      & 0.393   & 0.379   & 0.416     & 0.412 & 0.422 & 0.509      & 0.481   & 0.462   \\ \toprule
        
        \multirow{5}{*}{\rotatebox{90}{Traffic}}
                                 & 96     & 0.423       & 0.416   & 0.412   & 0.444       & 0.436    & 0.429   & 0.779      & 0.753   & 0.731   & 0.563     & 0.555 & 0.538 & 0.659      & 0.657   & 0.650   \\ 
                                 & 192    & 0.441       & 0.435   & 0.429   & 0.460       & 0.455    & 0.446   & 0.756      & 0.721   & 0.710   & 0.568     & 0.562 & 0.545 & 0.829      & 0.814   & 0.804   \\
                                 & 336    & 0.458       & 0.453   & 0.447   & 0.479       & 0.475    & 0.465   & 0.765      & 0.739   & 0.739   & 0.595     & 0.589 & 0.572 & 1.094      & 1.072   & 1.025   \\
                                 & 720    & 0.490       & 0.487   & 0.480   & 0.517       & 0.513    & 0.502   & 0.806      & 0.781   & 0.780   & 0.659     & 0.653 & 0.634 & 1.307      & 1.292   & 1.195   \\ \cline{2-17}
                                 & Avg    & 0.453       & 0.448   & 0.442   & 0.475       & 0.470    & 0.461   & 0.777      & 0.749   & 0.740   & 0.596     & 0.590 & 0.572 & 0.972      & 0.959   & 0.918   \\ \toprule
        
        \multirow{5}{*}{\rotatebox{90}{Weather}}
                                 & 96     & 0.171       & 0.166   & 0.162   & 0.173       & 0.165    & 0.163   & 0.184      & 0.183   & 0.172   & 0.185     & 0.177 & 0.172 & 0.262      & 0.240   & 0.243   \\ 
                                 & 192    & 0.224       & 0.220   & 0.213   & 0.223       & 0.213    & 0.210   & 0.226      & 0.222   & 0.214   & 0.224     & 0.218 & 0.212 & :0.3003    & 0.282   & 0.275   \\
                                 & 336    & 0.278       & 0.270   & 0.264   & 0.284       & 0.271    & 0.267   & 0.273      & 0.268   & 0.260   & 0.272     & 0.267 & 0.260 & 0.323      & 0.315   & 0.312   \\
                                 & 720    & 0.353       & 0.347   & 0.340   & 0.357       & 0.349    & 0.339   & 0.338      & 0.330   & 0.329   & 0.341     & 0.335 & 0.328 & 0.389      & 0.385   & 0.367   \\ \cline{2-17}
                                 & Avg    & 0.256       & 0.251   & 0.245   & 0.259       & 0.249    & 0.245   & 0.255      & 0.251   & 0.244   & 0.255     & 0.249 & 0.243 & 0.325      & 0.306   & 0.299   \\ \bottomrule
        \end{tabular}
    }
\end{table*}

\begin{figure}
  \centering
  \centerline{\includegraphics[width=\textwidth]{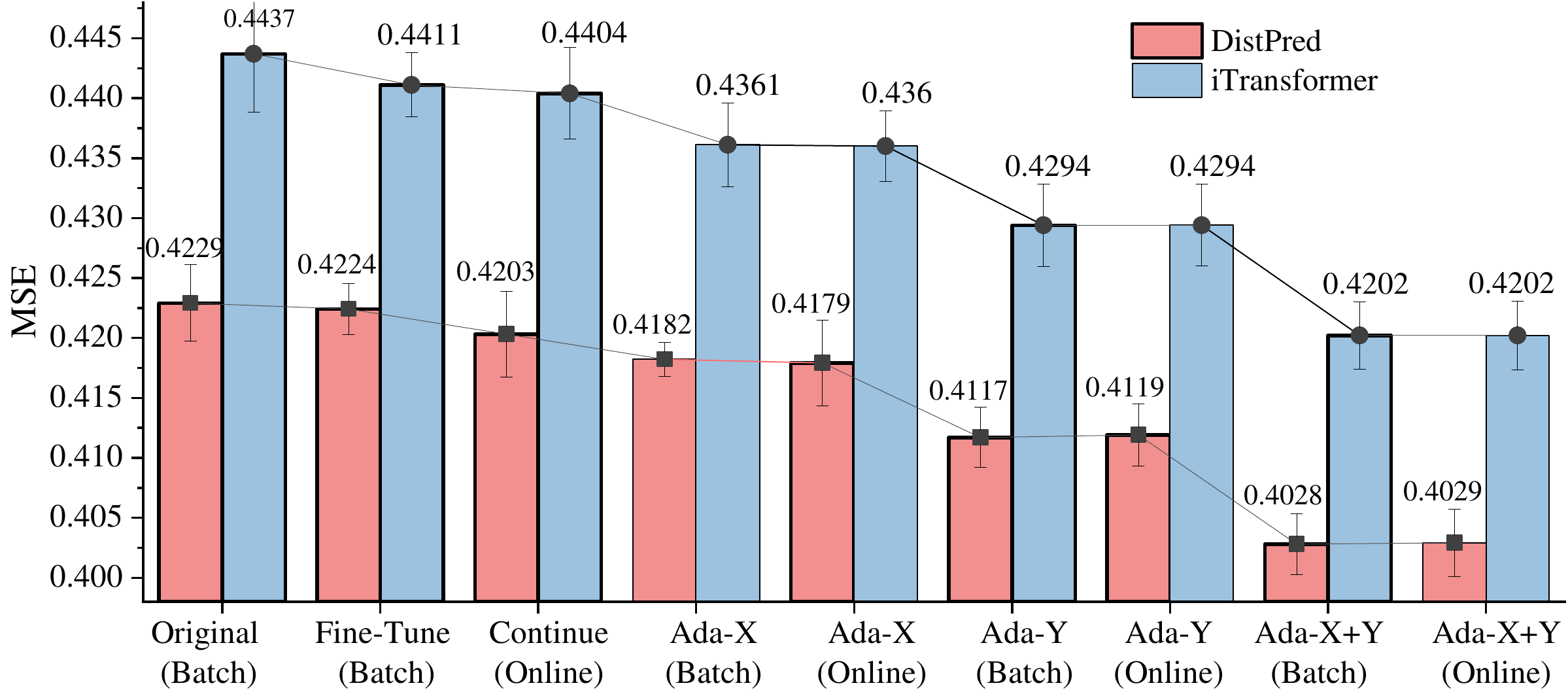}}
  \caption{Performances of $\delta$-Adapter under batch or online training}
  \label{fig_xy_compare}
\end{figure}

\subsection{Quantile Calibrator (QC) and Conformal Calibrator (CC)}
\label{app_choice_QC_CC}

Figure \ref{fig_vis_caliss} illustrates that both calibrators produce well-calibrated intervals.  QC attains higher coverage than CC, while on another sample CC is better. QC tends to yield slightly wider, more conservative bands. CC delivers comparably high coverage with tighter intervals. Overall, the two methods are complementary and reliably improve uncertainty quantification over the raw predictor.

Now, let's discuss how to choose between QC and CC.
Both modules turn a frozen point forecaster into a calibrated probabilistic predictor, but they are aimed at slightly different desiderata:
QC directly learns horizon-wise conditional quantiles as bounded offsets around the point forecast, which produces a smooth quantile function over multiple levels without assumptions about the underlying distribution.
CC learns only a heteroscedastic scale function and combines it with normalized-residual conformal prediction on a held-out calibration set, yielding symmetric but input-dependent intervals with finite-sample marginal coverage under exchangeability.

Empirically, both variants achieve strong coverage, but QC tends to produce marginally wider and more conservative bands, while CC attains similar coverage with somewhat tighter intervals (see Figs. 5 and 6). 
For a new real-world dataset, our recommendation is therefore:
If strict coverage guarantees are the main requirement, CC is preferable, because the conformal step provides finite-sample marginal coverage at the target level.
If one needs a rich predictive distribution or multiple coverage levels from a single model, QC is more convenient, as it directly returns a full quantile curve while remaining non-parametric w.r.t. the underlying distribution.

\subsection{Comparison between Adapters and Online learning Methods}
\label{app_cmp_onlines}

The adapter-based methods we reviewed include SOLID, TAFAS, etc., and online approaches, e.g., FSNet and OneNet. These methods aim to mitigate test-time concept drift via selective layer retraining (SOLID), online adapter updates (TAFAS), auxiliary loss (PETSA), layer-wise adjustment and memory (FSNet), and dynamic model selection (OneNet). However, according to works by \cite{liang2024act, lau2025dsof}, the above methods have used future labels to some extent, causing label leakage in long-term forecasting, where future ground truth is adopted in advance for adaptation.
To achieve a fair comparison, we removed label leakage (it may cause performance degradation of some methods) to test their performance. 
As shown in Table \ref{tb_online_all}, it can be found that our method achieves the lowest error on every dataset across all backbones.

\begin{table*}[!ht]
    \centering
    \caption{Comparison of various adapters and online methods.}
    \label{tb_online_all}
    \resizebox{\textwidth}{!}
    {
        \begin{threeparttable}
        \begin{tabular}{cc|cccc|cccc|cccc||cc}
        \toprule
        \multicolumn{2}{c|}{Model}         & \multicolumn{4}{c|}{DistPred}    & \multicolumn{4}{c|}{iTransformer} & \multicolumn{4}{c||}{Autoformer}   & \multicolumn{2}{c}{Others} \\
        \toprule
        Dataset                  & Length & Offline & SOLID & TAFAS & Ada-X+Y  & Offline & SOLID & TAFAS & Ada-X+Y  & Offline & SOLID & TAFAS & Ada-X+Y  & OneNet$^{\dag}$       & FSNet$^{\dag}$       \\ \toprule
        \multirow{5}{*}{ELC}     & 96     & 0.155    & 0.154 & 0.156 & 0.146 & 0.163    & 0.165 & 0.165 & 0.156 & 0.228    & 0.211 & 0.230 & 0.196 & 0.247        & 0.310       \\
                                 & 192    & 0.169    & 0.170 & 0.170 & 0.164 & 0.175    & 0.177 & 0.176 & 0.167 & 0.437    & 0.433 & 0.442 & 0.379 & 0.300        & 0.442       \\
                                 & 336    & 0.185    & 0.182 & 0.183 & 0.178 & 0.193    & 0.189 & 0.189 & 0.176 & 0.612    & 0.583 & 0.588 & 0.580 & 0.325        & 0.483       \\
                                 & 720    & 0.221    & 0.220 & 0.220 & 0.213 & 0.231    & 0.230 & 0.231 & 0.222 & 0.782    & 0.780 & 0.781 & 0.757 & 0.798        & 0.913       \\ \cline{2-16}
                                 & Avg    & 0.182    & 0.182 & 0.182 &\bf 0.175 & 0.190    & 0.190 & 0.190 &\bf 0.180 & 0.515    & 0.502 & 0.510 &\bf 0.478 & 0.417        & 0.537       \\ \toprule
        \multirow{5}{*}{ETTh1}   & 96     & 0.389    & 0.392 & 0.393 & 0.381 & 0.390    & 0.391 & 0.394 & 0.382 & 0.449    & 0.444 & 0.442 & 0.431 & 0.524        & 0.730       \\
                                 & 192    & 0.451    & 0.450 & 0.452 & 0.445 & 0.444    & 0.450 & 0.437 & 0.440 & 0.571    & 0.566 & 0.565 & 0.561 & 0.571        & 0.820       \\
                                 & 336    & 0.498    & 0.495 & 0.534 & 0.480 & 0.479    & 0.481 & 0.512 & 0.474 & 0.656    & 0.652 & 0.653 & 0.635 & 0.614        & 0.899       \\
                                 & 720    & 0.505    & 0.502 & 0.524 & 0.497 & 0.504    & 0.509 & 0.563 & 0.499 & 0.695    & 0.694 & 0.705 & 0.681 & 0.762        & 1.060       \\  \cline{2-16}
                                 & Avg    & 0.461    & 0.460 & 0.476 &\bf 0.451 & 0.454    & 0.458 & 0.477 &\bf 0.449 & 0.593    & 0.589 & 0.591 &\bf 0.577 & 0.618        & 0.877       \\  \toprule
        \multirow{5}{*}{ETTh2}   & 96     & 0.303    & 0.320 & 0.319 & 0.294 & 0.296    & 0.311 & 0.311 & 0.286 & 0.375    & 0.371 & 0.381 & 0.353 & 0.515        & 0.515       \\  
                                 & 192    & 0.378    & 0.371 & 0.413 & 0.364 & 0.383    & 0.385 & 0.412 & 0.375 & 0.438    & 0.439 & 0.437 & 0.430 & 0.568        & 0.572       \\
                                 & 336    & 0.447    & 0.445 & 0.447 & 0.432 & 0.429    & 0.429 & 0.498 & 0.423 & 0.464    & 0.453 & 0.452 & 0.456 & 0.602        & 0.615       \\
                                 & 720    & 0.431    & 0.428 & 0.430 & 0.427 & 0.445    & 0.446 & 0.570 & 0.425 & 0.473    & 0.479 & 0.475 & 0.465 & 0.637        & 0.646       \\  \cline{2-16}
                                 & Avg    & 0.390    & 0.391 & 0.402 &\bf 0.379 & 0.388    & 0.393 & 0.448 &\bf 0.377 & 0.438    & 0.435 & 0.436 &\bf 0.426 & 0.581        & 0.587       \\   \toprule
        \multirow{5}{*}{ETTm1}   & 96     & 0.339    & 0.340 & 0.339 & 0.318 & 0.345    & 0.341 & 0.345 & 0.331 & 0.586    & 0.588 & 0.512 & 0.461 & 0.435        & 0.655       \\
                                 & 192    & 0.384    & 0.389 & 0.387 & 0.373 & 0.382    & 0.381 & 0.384 & 0.362 & 0.627    & 0.628 & 0.642 & 0.564 & 0.496        & 0.825       \\
                                 & 336    & 0.416    & 0.412 & 0.414 & 0.408 & 0.431    & 0.423 & 0.437 & 0.420 & 0.691    & 0.689 & 0.673 & 0.642 & 0.585        & 0.867       \\
                                 & 720    & 0.510    & 0.484 & 0.504 & 0.484 & 0.511    & 0.510 & 0.512 & 0.500 & 0.754    & 0.740 & 0.725 & 0.721 & 0.676        & 1.055       \\  \cline{2-16}
                                 & Avg    & 0.412    & 0.406 & 0.411 &\bf 0.396 & 0.417    & 0.414 & 0.420 &\bf 0.403 & 0.664    & 0.661 & 0.638 &\bf 0.597 & 0.548        & 0.851       \\   \toprule
        \multirow{5}{*}{ETTm2}   & 96     & 0.179    & 0.179 & 0.180 & 0.171 & 0.182    & 0.182 & 0.183 & 0.176 & 0.271    & 0.267 & 0.273 & 0.233 & 0.434        & 0.334       \\
                                 & 192    & 0.245    & 0.247 & 0.249 & 0.237 & 0.255    & 0.254 & 0.259 & 0.249 & 0.290    & 0.298 & 0.295 & 0.281 & 0.602        & 0.873       \\
                                 & 336    & 0.309    & 0.311 & 0.311 & 0.298 & 0.327    & 0.325 & 0.332 & 0.315 & 0.359    & 0.357 & 0.353 & 0.348 & 0.829        & 1.156       \\
                                 & 720    & 0.406    & 0.402 & 0.411 & 0.391 & 0.435    & 0.432 & 0.440 & 0.420 & 0.435    & 0.432 & 0.431 & 0.423 & 2.819        & 2.090       \\   \cline{2-16}
                                 & Avg    & 0.285    & 0.285 & 0.288 &\bf 0.274 & 0.300    & 0.298 & 0.304 &\bf 0.290 & 0.339    & 0.339 & 0.338 &\bf 0.321 & 1.171        & 1.113       \\  \toprule
        \multirow{5}{*}{Exchange}  & 96     & 0.084    & 0.081 & 0.080 & 0.085 & 0.099    & 0.097 & 0.098 & 0.082 & 0.195    & 0.166 & 0.175 & 0.165 & 0.338        & 0.709       \\
                                 & 192    & 0.190    & 0.189 & 0.189 & 0.182 & 0.180    & 0.177 & 0.176 & 0.167 & 0.260    & 0.234 & 0.237 & 0.236 & 0.591        & 0.771       \\
                                 & 336    & 0.319    & 0.313 & 0.374 & 0.279 & 0.352    & 0.359 & 0.420 & 0.300 & 0.437    & 0.434 & 0.432 & 0.407 & 0.617        & 0.848       \\
                                 & 720    & 0.809    & 0.806 & 0.808 & 0.642 & 0.901    & 0.870 & 0.873 & 0.714 & 1.144    & 1.130 & 1.134 & 1.050 & 1.041        & 1.183       \\   \cline{2-16}
                                 & Avg    & 0.350    & 0.347 & 0.363 &\bf 0.297 & 0.383    & 0.376 & 0.392 &\bf 0.316 & 0.509    & 0.491 & 0.495 &\bf 0.465 & 0.647        & 0.878       \\   \toprule
        \multirow{5}{*}{Traffic} & 96     & 0.423    & 0.424 & 0.424 & 0.410 & 0.444    & 0.445 & 0.443 & 0.426 & 0.659    & 0.634 & 0.664 & 0.653 & 0.546        & 0.677       \\
                                 & 192    & 0.441    & 0.447 & 0.447 & 0.431 & 0.460    & 0.463 & 0.467 & 0.448 & 0.829    & 0.827 & 0.844 & 0.804 & 0.549        & 0.690       \\
                                 & 336    & 0.458    & 0.451 & 0.457 & 0.443 & 0.479    & 0.475 & 0.472 & 0.456 & 1.094    & 1.080 & 1.096 & 1.030 & 0.571        & 0.705       \\
                                 & 720    & 0.490    & 0.490 & 0.493 & 0.475 & 0.517    & 0.515 & 0.523 & 0.513 & 1.307    & 1.296 & 1.297 & 1.282 & 0.603        & 0.732       \\  \cline{2-16}
                                 & Avg    & 0.453    & 0.453 & 0.455 &\bf 0.440 & 0.475    & 0.475 & 0.476 &\bf 0.461 & 0.972    & 0.959 & 0.975 &\bf 0.942 & 0.567        & 0.701       \\   \toprule
        \multirow{5}{*}{Weather} & 96     & 0.171    & 0.168 & 0.170 & 0.160 & 0.173    & 0.170 & 0.172 & 0.162 & 0.262    & 0.260 & 0.299 & 0.235 & 0.251        & 0.322       \\
                                 & 192    & 0.224    & 0.224 & 0.226 & 0.211 & 0.223    & 0.221 & 0.222 & 0.210 & :0.3003  & 0.298 & 0.295 & 0.276 & 0.295        & 0.465       \\
                                 & 336    & 0.278    & 0.275 & 0.276 & 0.254 & 0.284    & 0.281 & 0.282 & 0.261 & 0.323    & 0.321 & 0.321 & 0.309 & 0.316        & 0.514       \\
                                 & 720    & 0.353    & 0.353 & 0.353 & 0.342 & 0.357    & 0.357 & 0.358 & 0.345 & 0.389    & 0.386 & 0.384 & 0.375 & 0.697        & 0.862       \\  \cline{2-16}
                                 & Avg    & 0.256    & 0.255 & 0.256 &\bf 0.242 & 0.259    & 0.257 & 0.259 &\bf 0.244 & 0.325    & 0.316 & 0.325 &\bf 0.299 & 0.390        & 0.541       \\ \bottomrule
        \end{tabular}
        \begin{tablenotes}
        \item[$^{\dag}$] OneNet and FSNet are implemented based on the public library provided in this paper, with their backbone models derived from their respective literatures. This implementation removes concept drift, and as a result, the online learning performance has deteriorated.
        \end{tablenotes}
    \end{threeparttable}
    }
\end{table*}

\subsection{Ablation Studies of $\delta$-Adapter's Depth, Width and Value}

Table \ref{tb_depth_width} shows that ablation studies of $\delta$-Adapter's depth and width. It can be found that the depth has little impact on performance, while the greater the width, the slight improvement in performance.

\begin{table*}[!ht]
    \centering
    \caption{Ablation studies of $\delta$-Adapter's depth and width.}
    \label{tb_depth_width}
    \resizebox{\textwidth}{!}
    {
        \begin{tabular}{c|cccc|cccc|cccc}
        \toprule
        Depth   & \multicolumn{4}{c|}{2}                                                                          & \multicolumn{4}{c|}{3}                                                                       & \multicolumn{4}{c}{4}                                                                       \\ \toprule
        Width   & \multicolumn{1}{c|}{64}     & \multicolumn{1}{c|}{128}    & \multicolumn{1}{c|}{256}   & 512    & \multicolumn{1}{c|}{64}    & \multicolumn{1}{c|}{128}   & \multicolumn{1}{c|}{256}   & 512   & \multicolumn{1}{c|}{64}    & \multicolumn{1}{c|}{128}   & \multicolumn{1}{c|}{256}   & 512   \\ \toprule
        ELC     & \multicolumn{1}{c|}{0.159} & \multicolumn{1}{c|}{0.157} & \multicolumn{1}{c|}{0.155} & 0.1533 & \multicolumn{1}{c|}{0.158} & \multicolumn{1}{c|}{0.157} & \multicolumn{1}{c|}{0.154} & 0.152 & \multicolumn{1}{c|}{0.159} & \multicolumn{1}{c|}{0.157} & \multicolumn{1}{c|}{0.154} & 0.152 \\ 
        Weather & \multicolumn{1}{c|}{0.162}  & \multicolumn{1}{c|}{0.16}   & \multicolumn{1}{c|}{0.159} & 0.158  & \multicolumn{1}{c|}{0.162} & \multicolumn{1}{c|}{0.161} & \multicolumn{1}{c|}{0.159} & 0.158 & \multicolumn{1}{c|}{0.162} & \multicolumn{1}{c|}{0.161} & \multicolumn{1}{c|}{0.16}  & 0.158 \\ 
        Traffic & \multicolumn{1}{c|}{0.439}  & \multicolumn{1}{c|}{0.437}  & \multicolumn{1}{c|}{0.433} & 0.43   & \multicolumn{1}{c|}{0.44}  & \multicolumn{1}{c|}{0.436} & \multicolumn{1}{c|}{0.433} & 0.43  & \multicolumn{1}{c|}{0.44}  & \multicolumn{1}{c|}{0.436} & \multicolumn{1}{c|}{0.433} & 0.43  \\ \bottomrule
        \end{tabular}
    }
\end{table*}

\subsection{The Choice of Hyperparameter $\delta$}

$\delta$ is related to the properties of the dataset (e.g., noise level, degree of concept drift). In our work, we divided the datasets into two categories: one with severe concept drift ($\delta=0.1$, e.g., Traffic, Weather, etc.) and the other with non-severe concept drift ($\delta=0.01$, e.g., ETT, etc.). We did not perform hyperparameter searches based on models or datasets; instead, for datasets with severe concept drift, setting $\delta=0.1$ is sufficient. In addition, we conducted ablation experiments on $\delta$.
As shown in Table \ref{tb_deltas}, a better value of $\delta=0.1$ might yield better results.
In our paper, we only reported the two settings ($\delta$=0.1 or 0.01).

\begin{table*}[!ht]
    \centering
    \caption{Ablation studies of $\delta$-Adapter's value.}
    \label{tb_deltas}
    \resizebox{\textwidth}{!}
    {
        \begin{tabular}{cc|cc|cc|cc|cc|cc|cc|cc|cc}
        \toprule
        \multicolumn{2}{c}{+0.1X} & \multicolumn{2}{c}{$\times$0.1X} & \multicolumn{2}{c}{$\times$0.2X} & \multicolumn{2}{c}{+0.1Y} & \multicolumn{2}{c}{$\times$0.1Y} & \multicolumn{2}{c}{$\times$0.2Y} & \multicolumn{2}{c}{+0.1(X\&Y)} & \multicolumn{2}{c}{$\times$0.1(X\&Y)} & \multicolumn{2}{c}{$\times$0.2(X\&Y)} \\ \toprule
        Val         & Test        & Val         & Test        & Val         & Test        & Val         & Test        & Val         & Test        & Val         & Test        & Val            & Test          & Val            & Test          & Val            & Test          \\ \toprule
        0.418       & 0.166       & 0.427       & 0.168       & 0.425       & 0.169       & 0.421       & 0.168       & 0.415       & 0.165       & 0.413       & 0.167       & 0.413          & 0.160         & 0.416          & 0.162         & 0.411          & 0.162     \\
        \bottomrule
        \end{tabular}
    }
\end{table*}

\begin{table}[]

\end{table}

\subsection{Performance of $\delta$-Adapter on black-box models.}

Table \ref{tb_black_test} shows the performance of the $\delta$-Adapter on the black-box models. Specifically, we used TabPFN \cite{hollmann2025tabpfn} and TimesFM \cite{das2024decoder} as frozen black-box models to conduct zero-shot testing on various datasets. For comparison, we corrected the output results of these black-box models by adding Ada-Y. As shown in the table below, it can be found that after adding Ada-Y, the prediction error of the model is significantly reduced, which further proves the effectiveness of the proposed method.

\begin{table*}[!h]
    \centering
    \caption{Performance of $\delta$-Adapter on black-box models.}
    \label{tb_black_test}
    \resizebox{\textwidth}{!}
    {
        \begin{tabular}{ccccccccc}
        \toprule
                & Traffic & Weather & ELC   & Exchange & ETTh1 & ETTh2 & ETTm1 & ETTm2 \\ \toprule
        TabPFN  & 0.367   & 0.875   & 0.115 & 0.129    & 0.129 & 0.180 & 0.037 & 0.114 \\
        Ada-Y   & \bf0.342   & \bf0.552   &\bf 0.089 &\bf 0.096    & \bf0.095 & \bf0.171 &\bf 0.034 &\bf 0.103 \\ \toprule
        TimesFM & 0.211   & 0.168   & 0.084 & 0.239    & 0.029 & 0.135 & 0.028 & 0.267 \\
        Ada-Y   &\bf 0.196   &\bf 0.157   &\bf 0.081 &\bf 0.215    &\bf 0.024 &\bf 0.104 &\bf 0.025 &\bf 0.223 \\
        \bottomrule
        \end{tabular}
    }
\end{table*}

\subsection{Visualization of Feature Selector and Corrector}

Figure \ref{fig_vis_caliss} shows that both Quantile Calibrator (QC) and Conformal Calibrator (CC) produce adaptive, heteroscedastic prediction intervals that track signal volatility—widening near peaks/troughs and typically enclosing the ground truth across diverse samples. QC tends to be more conservative (wider bands) and sometimes attains higher per-sample coverage (e.g., PICP$\approx$0.729), while CC achieves tighter intervals with comparable coverage (e.g., PICP$\approx$0.677 on multiple samples). The consistent behavior across the two test sets indicates that the calibrators are complementary and robust, yielding reliable uncertainty quantification beyond the raw predictor.

Figure \ref{fig_vis_imp} present the visualization results of the feature selector. We selected 90\%, 50\%, 30\%, and 10\% of the input data respectively to test the pre-trained iTransformer model, in order to observe their impact on the output results. It can be seen from the figure that most of the important features selected by the $\delta$-Adapter determine the performance of the model, while other features are relatively less important.

\begin{figure}[h]
  \centering
  \centerline{\includegraphics[width=\textwidth]{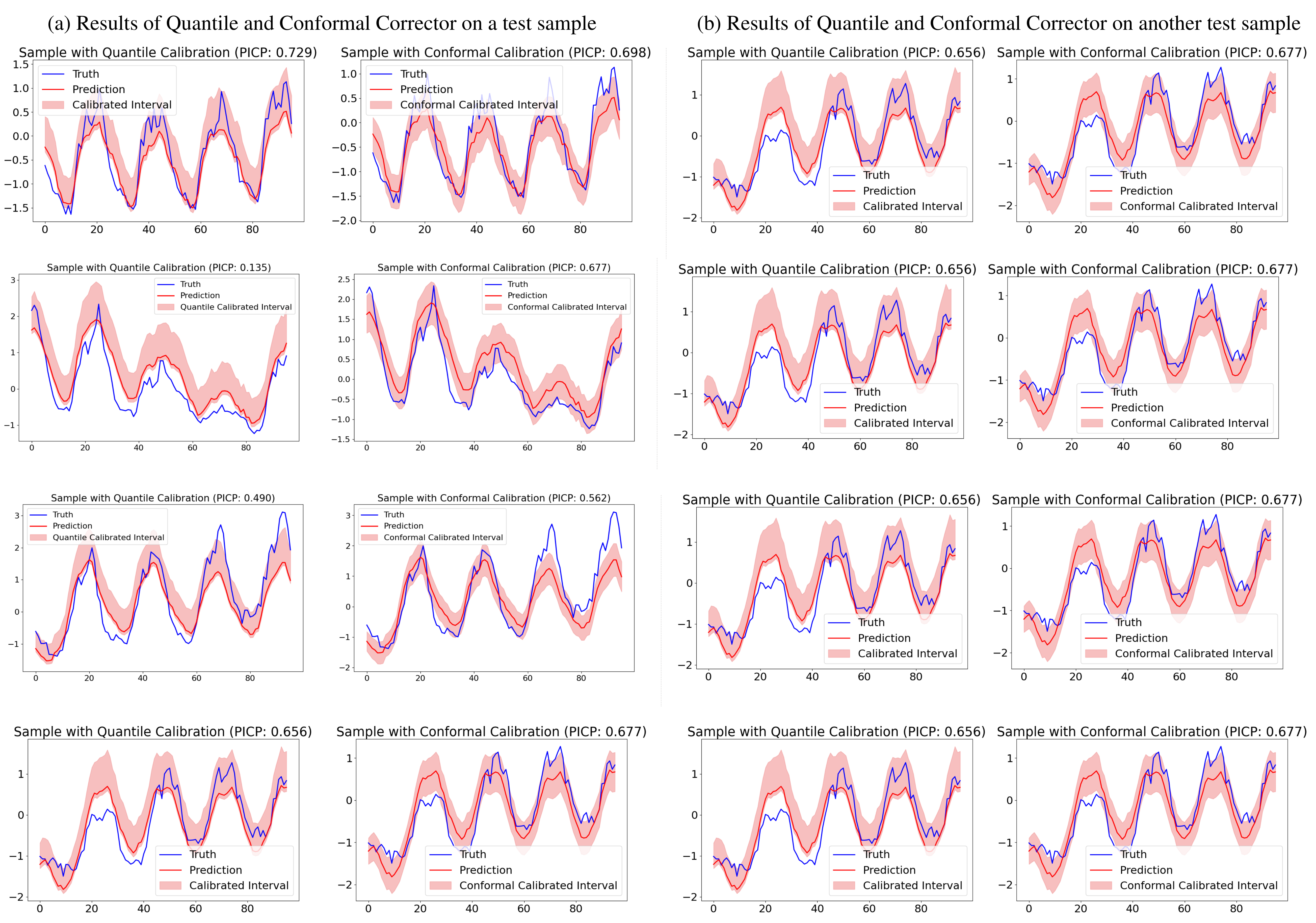}}
  \caption{Visualization of the Quantile and Conformal calibrator predictions.}
  \label{fig_vis_caliss}
\end{figure}

\begin{figure}
  \centering
  \centerline{\includegraphics[width=\textwidth]{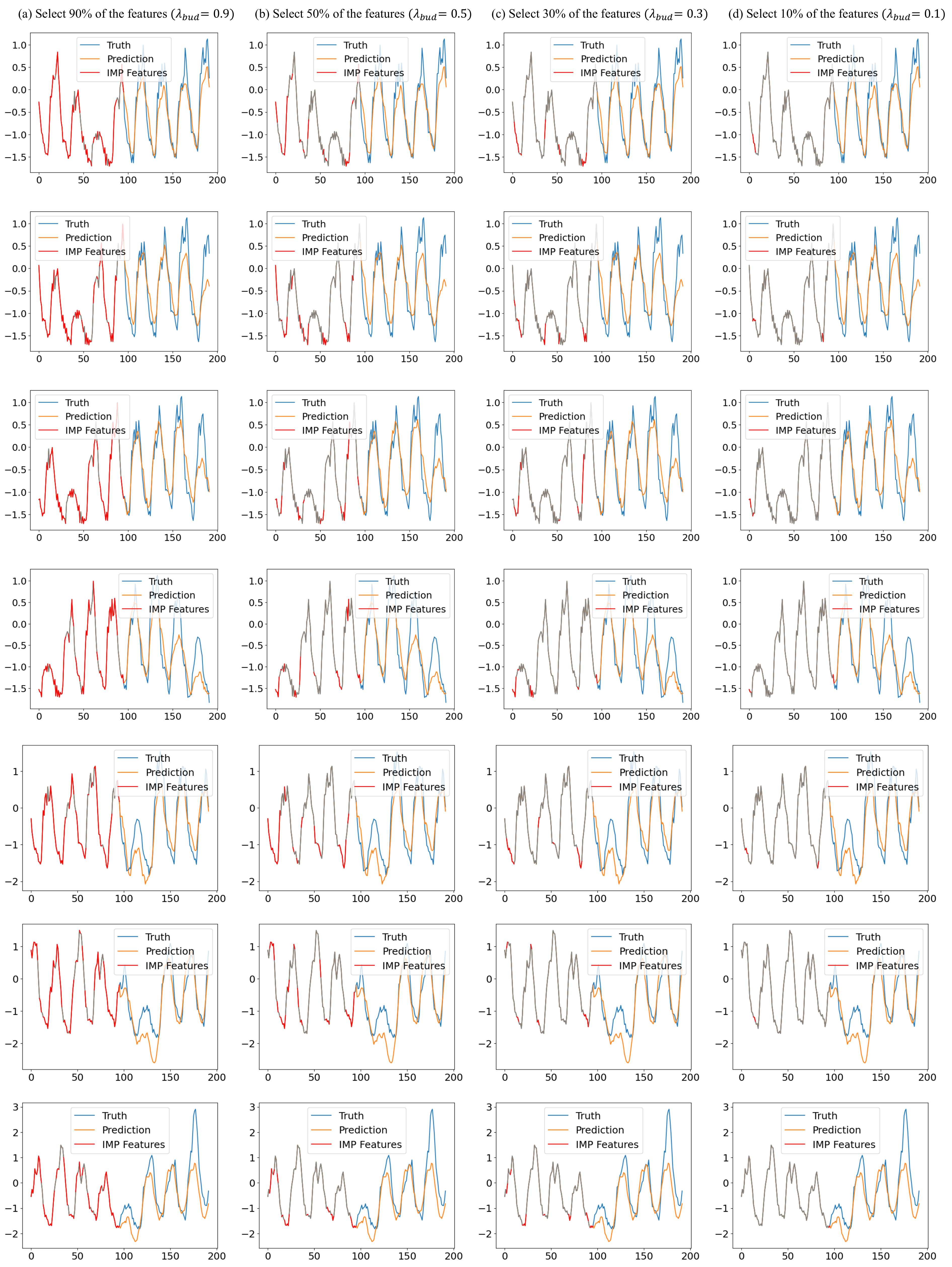}}
  \caption{Visualization of different important features learned by the mask adapter.}
  \label{fig_vis_imp}
\end{figure}

\end{document}